\newtheorem{theorem}{Theorem}
\newtheorem{lemma}{Lemma}
\newtheorem{assumption}{Assumption}
\definecolor{steelblue}{RGB}{70,130,180}
\def\bbz{\mathbb Z}
\def\bbr{\mathbb R}
\title{\LARGE \bf
Path Planning Using Wassertein Distributionally Robust Deep Q-learning
}
\author{Cem Alpt\"{u}rk and Venkatraman Renganathan
\thanks{This project has received funding from the European Research Council (ERC) under the European Union’s Horizon 2020 research and innovation program under grant agreement No 834142 (Scalable Control). C. Alpt\"{u}rk was a Masters Thesis student at the Department of Automatic Control, LTH, Lund University, Sweden. V. Renganathan is with the Department of Automatic Control, LTH, Lund University, Sweden. Email:        venkatraman.renganathan@control.lth.se, cem.alpturk@gmail.com}%
}
\begin{document}

\maketitle
\thispagestyle{empty}
\pagestyle{empty}

\begin{abstract}
We investigate the problem of risk averse robot path planning using the deep reinforcement learning and distributionally robust optimization perspectives. Our problem formulation involves modelling the robot as a stochastic linear dynamical system, assuming that a collection of process noise samples is available. We cast the risk averse motion planning problem as a Markov decision process and propose a continuous reward function design that explicitly takes into account the risk of collision with obstacles while encouraging the robot's motion towards the goal. We learn the risk-averse robot control actions through Lipschitz approximated Wasserstein distributionally robust deep Q-learning to hedge against the noise uncertainty. The learned control actions result in a safe and risk averse trajectory from the source to the goal, avoiding all the obstacles. Various supporting numerical simulations are presented to demonstrate our proposed approach.    
\end{abstract}


\section{Introduction} \label{sec_intro}
Given the tremendous increase in the computing power, many computationally expensive control theory problems can now be addressed using the deep reinforcement learning approaches \cite{xieDeepRLPath, yan2020towards}. So far, the motion planning problem with uncertainty has been investigated from two different perspectives namely the control theory \cite{luders2010chance} and the reinforcement learning (RL) \cite{base_paper}. When stochastic uncertainties are considered in the problems such as path planning, both the above said approaches resort to the powerful stochastic optimization techniques as in \cite{kandel2020safe} to ensure satisfaction of specifications with high probability. However, when assumptions of certain functional forms for the system uncertainties are made in the name of tractability, they may lead to potentially severe miscalculation of risk when the uncertain robot is made to operate in a dynamic environment \cite{majumdarRisk}. Such shortcomings can be addressed through carefully designed risk bounded motion planning approaches using distributionally robust optimization techniques. The interested readers are referred to these non-exhaustive list of papers on risk averse motion planning \cite{aoude2013probabilistically, subramani2019risk, xiao2019explicit, lathrop2021distributionally, sleimanRisk}. 

Risk averse path planning problems emphasize the need for exact propagation of uncertainties. For instance, either the distributions of all the uncertainties or the moments defining the distributions are required to be known in advance or calculated exactly for all time steps to evaluate the risk of obstacle collision as in \cite{renganathan2022risk}. It is an usual practice to associate a particular distribution to the uncertainty (often Gaussian) just for the sake of tractability \cite{luders2010chance}. But often in reality, all we have is just a collection of samples of the uncertainty and trying to fit a distribution to it may cause undue risk. On a parallel note, the central idea of safe and robust RL as described in \cite{morimoto2005robust, brunke2022safe} is to learn control policies for agents that encourage safety or robustness, and to design methods that can formally certify the safety of a learned control policy. For instance, a maximum entropy based lower bound on a robust RL objective was used to learn policies that are robust to some disturbances in the dynamics and the reward function in \cite{eysenbach2021maximum}. But analysis of safe and robust RL algorithms with distributional uncertainty has received very less attention. Authors in \cite{kandel2020safe} use the Wasserstein distributionally robust deep Q-learning to hedge against the distributional uncertainty and approximately solve the Bellman equation associated with the deep Q-learning approach given in \cite{dqn_atari}. In this paper, we stick to the sample based uncertainty modeling of process noise and take a similar approach as \cite{kandel2020safe}, and further use the Lipschitz constant based approximations advocated in Theorem 5 of \cite{kuhn} to learn risk-averse robot control actions. A similar problem was investigated by \cite{base_paper}, albeit with usual Gaussian assumptions and no formal risk consideration.  

\emph{Contributions:} This article leverages powerful results in deep reinforcement learning theory and distributionally robust optimization to learn control policies for robots to operate in a risk-averse manner in an environment. Our main contributions are: 
\begin{enumerate}
    \item we learn safe robot control actions at all the state space positions to infer a trajectory to move from source to goal by avoiding all obstacles. We account for the uncertainty due the robot initial states and the process noise through reward function design and learn the risk averse control actions using approximated Wasserstein distributionally robust $Q$-learning.
    \item we demonstrate our proposed approach using a series of numerical simulations and show the effectiveness of our proposed approach. 
\end{enumerate}
Following a short summary of notations and preliminaries, the rest of the paper is organized as follows. In \S \ref{sec_problem_formulation}, the risk-averse path planning problem associated with the uncertain robot system is presented. The Wasserstein distributionally robust $Q$-learning approach is discussed in \S \ref{sec_wdrql}. The proposed idea is then demonstrated using a numerical simulation in \S \ref{sec_num_results}. Finally, the paper is closed in \S \ref{sec_conclusion}. Due to the page restrictions, some proofs are available in the appendix. 
\section*{Notations \& Preliminaries}
The set of real numbers, integers are denoted by $\bbr, \bbz$. The subset of real numbers greater than $a \in \bbr$ is denoted by $\bbr_{> a}$.
The set of integers between two values $a,b \in \bbz$ with $a<b$ is denoted by $[a:b]$. 
The set of non-negative integers is denoted by $\mathbb{Z}_{+}$.
We denote by $\mathcal{B}(\bbr^{d})$ and $\mathcal{P}(\bbr^{d})$ the Borel $\sigma-$algebra on $\bbr^{d}$ and the space of probability measures on $(\bbr^{d}, \mathcal{B}(\bbr^{d}))$ respectively. A probability distribution with mean $\mu$ and covariance $\Sigma$ is denoted by $\mathbb{P}(\mu, \Sigma)$, and specifically $\mathcal{N}_{d}(\mu, \Sigma)$ if the distribution is normal in $\mathbb{R}^{d}$. An uniform distribution over a set $A$ is denoted by $\mathcal{U}(A)$. Given a constant $q \in \bbr_{\geq 1}$, the set of probability measures in $\mathcal{P}(\bbr^{d})$ with finite $q-$th moment is denoted by $\mathcal{P}_{q}(\bbr^{d}) := \left\{ \mu \in \mathcal{P}(\bbr^{d}) \mid \int_{\bbr^{d}} \left \Vert x \right \|^{q} d\mu < \infty \right\}$. The type-$q$ Wasserstein distance $\forall q \geq 1$ between $\mathbb{Q}_1, \mathbb{Q}_2 \in \mathcal{P}_{q}(\bbr^{d})$ with $\Pi(\mathbb{Q}_1, \mathbb{Q}_2)$ being the set of all joint probability distributions on $\bbr^{d} \times \bbr^{d}$ with marginals $\mathbb{Q}_1$ and $\mathbb{Q}_2$ is
\begin{equation} \label{eqn_wass_distance}
    \footnotesize W_{q}(\mathbb{Q}_1, \mathbb{Q}_2) \overset{\Delta}{=} \left( \underset{\pi \in \Pi(\mathbb{Q}_1, \mathbb{Q}_2)}{\inf}  \int_{\bbr^{d} \times \bbr^{d}} \left \Vert z_1 - z_2 \right \|^{q} \pi(d z_{1}, d z_{2}) \right)^{\frac{1}{q}}. 
\end{equation}
\section{Problem Formulation} \label{sec_problem_formulation}
\subsection{Robot \& Environment Model}
The robot is modeled as a stochastic discrete time linear time invariant system and it is assumed to move within a bounded environment $\mathcal{X} \subset \mathbb{R}^{n_{x}}$. 
There are in total $M \in \mathbb{Z}_{+}$ obstacles in the environment, each disjoint with the other and they are collectively referred as $\mathcal{O}$ with $\left | \mathcal{O} \right \vert = M$. Further, each obstacle is assumed to be static and of convex polytope shape. Then, the free space that the robot can traverse namely $\mathcal{X}_{\mathbf{free}} \subset \mathcal{X}$ is given by
\begin{align}
\mathcal{X}_{\mathbf{free}} = \mathcal{X} \, \backslash \, \mathcal{X}_{\mathbf{obs}}, \quad \text{and} \quad \mathcal{X}_{\mathbf{obs}} := \bigcup \limits_{i=1}^M \mathcal{X}_{\mathbf{obs}}^{(i)},
\end{align}
where $\mathcal{X}_{\mathbf{obs}}^{(i)} \subset \mathcal{X}$ is the space occupied by the obstacle $i \in \mathcal{O}$. 
Similar to the obstacles, we define a goal region, $\mathcal{X}_{\mathbf{goal}} \subset \mathcal{X}$, that is both static and circular in shape with constant radius $R_{\mathbf{goal}} > 0$. This is a fair assumption \footnote{Our problem formulation works perfectly fine even with convex polytopic goal regions too.} given that all the robot states that are inside the region $\mathcal{X}_{\mathbf{goal}}$ which is centered at the goal point $\bar{x}_{\mathbf{goal}} \in \mathcal{X}_{\mathbf{goal}}$ are considered to be goal states. The position of the robot at time $k \in \mathbb{Z}_{+}$ is denoted as $p_{r,k} \in \mathbb{R}^{n_{r}}$. The robot is limited to move within the environmental boundaries whose limits are $[\underline{p}, \overline{p}]$ with $\underline{p}, \overline{p} \in \mathbb{R}^{n_{r}}$. Hence, just like the obstacles, the environmental boundaries are also treated as terminal states. The state of the robot at time $k$ is represented as $x_{k} \in \mathbb{R}^{n_{x}}$ and it may include the robot's position, velocity and other states of interest so that $n_{x} \geq n_{r}$. The robot is controlled through a control input $u_k$ which is selected from $\mathcal{U}$ such that $u_k \in \mathcal{U} \subseteq \mathbb{R}^{n_{u}}$. 
Given the above description, we define the dynamics (evolution) of the robot in $\mathcal{X}$ as
\begin{equation} \label{eqn_robot_dynamics}
    x_{k+1} = Ax_k + Bu_k + w_k.
\end{equation}
The robot is subject to a process disturbance $w_k \in \mathbb{R}^{n_{x}}$. The time invariant true distribution of the process noise $w_k$ at any time $k$ namely $\mathbb{P}_w$ is unknown, however it is assumed that a collection of $N \in \mathbb{N}$ independent samples of $w_k$ are available beforehand. 
That is, an i.i.d. sequence $\hat{w}_{1},\dots,\hat{w}_{N} \in \mathbb{R}^{n_{r}}$ is assumed to be known in advance. However, at any time $k$, the distribution of $w_{k}$ can be approximated through the following empirical distribution, $\hat{\mathbb{P}}_{w} = \frac{1}{N} \sum \limits_{i=1}^N \delta_{\hat{w}_i}$, where $\delta_{w_i}$ is the Dirac delta function. Note that, $\hat{\mathbb{P}}_{w}$ need not necessarily be the true distribution of the $w_{k}$. This is precisely where our approach differs from the existing safe RL literature, where it is a common practice to either assume a distribution for $w$ or bound for $w$. The initial state of the robot is assumed to be random and it is modelled as $x_0 \sim \mathbb{P}_{x_{0}} \left(\bar{x}_{0}, \Sigma_{x_0}\right)$, where $\mathbb{P}_{x_{0}}$ is assumed to be known with the mean $\bar{x}_{0} \in \mathbb{R}^{n_{x}}$, and the covariance $\Sigma_{x_0} \in \mathbb{R}^{n_{x} \times n_{x}}$ also being assumed to be known or estimated from prior experiments. It is clear from the above setting that $\mathbb{P}_{x_{k}}$ for $k \geq 1$ is not known exactly despite $\mathbb{P}_{x_{0}}$ being known exactly. 

\begin{assumption} \label{assume:obs_goal_separate}
There exists a minimum separation distance $L_{min} > 0$ between the goal and any of the obstacle regions. That is, $\mathcal{X}_{\mathbf{goal}} \cap \mathcal{X}_{\mathbf{obs}} = \emptyset$ and $\forall x_{\mathbf{goal}} \in \mathcal{X}_{\mathbf{goal}}, \forall x_{\mathbf{obs}} \in \mathcal{X}_{\mathbf{obs}}$, we see that
\begin{align} \label{eqn_min_separation}
    \left \| x_{\mathbf{goal}} - x_{\mathbf{obs}} \right \Vert_{2} \geq L_{min}.
\end{align}
\end{assumption}


\noindent \textbf{Main Problem Statement:} Given the uncertain robot evolution as in \eqref{eqn_robot_dynamics} with sample based process noise model, we learn the risk-averse control policy for all state space positions of the robot and hence design a trajectory for the robot from its initial state $x_{0}$ to the goal region $\mathcal{X}_{\mathbf{goal}}$ without colliding with any of the obstacles $\mathcal{O}$.
\subsection{Markov Decision Process (MDP) Formulation} \label{sec_mdp}
Given that we have to learn what actions to take provided we land anywhere in $\mathcal{X}$ given the uncertainty in the distributional information of $w$, taking control theory perspective can be hard. Hence, we resort to the RL approaches to address this shortcoming. That is, the above planning problem can be cast as a Markov decision process that consists of the tuple $\langle\mathcal{S}, \mathcal{A}, \mathbb{P}, r \rangle$. 
Here,  $\mathcal{S} \subset \mathbb{R}^{n_{\mathcal{S}}}$ is the state space, $\mathcal{A} \subset \mathbb{R}^{n_{\mathcal{A}}}$ is a finite set called the action space with $|\mathcal{A}| \in \mathbb{N}_{+} \, \backslash \, 0$, $r: \mathcal{S} \to \mathbb{R}$ is the reward function and $\mathbb{P}: \mathcal{S} \times \mathcal{A} \to \mathcal{P}(\mathcal{S})$ is the state transition probability which defines the probability distribution over the next states. 
We denote by $\hat{a}$, the null action where it does not cause a change in the position of the robot. 
We denote the total action space as $\hat{\mathcal{A}} = \mathcal{A} \cup \hat{a}$. 
Due to the Markov property of the system, the transition probabilities only depend on previous state and action such that for a given state $s_k$ and action $a_k$ and the history $h_k = \{s_0,a_0,...,s_k,a_k \}$, we see that $\mathbb{P}(s_{k+1} \mid h_k) = \mathbb{P}(s_{k+1} \mid s_k,a_k), \forall s_{k+1} \in \mathcal{S}$. 
At step $k$, the state $s_k \in \mathcal{S}$ contains the state of the robot $x_k$, the center of the goal $p_g$, and the centers of the obstacles $p_{\mathbf{obs}}^{(i)}$.
\begin{align}
    s_k := \left \{x_k, p_g, p_{\mathbf{obs}}^{(i)} \right\} \in \mathbb{R}^{n_{\mathcal{S}}}, 
\end{align}
where, $n_{\mathcal{S}} = (2+M)n_{x}$, and $i=1,\dots,M$. The state $s_{k}$ is referred as a \emph{terminal state} if $x_{k} \in \mathcal{X}_{\mathbf{obs}} \cup \mathcal{X}_{\mathbf{goal}}$ or if $x_{k} \notin \mathcal{X}$ and as \emph{non-terminal state} otherwise. The dimensions of MDP state $s_{k}$ depend on the number of obstacles in the environment. Increasing the number of obstacles will cause the dimension of $s_k$ to increase as well \footnote{The increase in the dimension of $s_{k}$ is the price that we need to pay to handle potentially dynamic obstacles.}.
An action $a_k \in \hat{\mathcal{A}}$ performed at state $s_k \in \mathcal{S}$, will cause a transition to a new state $s_{k+1} \in \mathcal{S}$ with the probability $\mathbb{P}(s_{k+1} \mid s_k, a_k)$. 
After the transition, a deterministic reward $r_k = r(s_{k+1})$ is obtained based on the state where we land. 
The actions, $a_k = \pi(s_k)$ are chosen based on a deterministic policy $\pi: \mathcal{S} \rightarrow \hat{\mathcal{A}}$. 
The set of all admissible control policies is denoted by $\Pi$. 
A sequence $\tau_k^{(\pi)} := (s_k,a_k,s_{k+1},a_{k+1},\dots,s_{K-1},a_{K-1},s_K)$ with a terminal state $s_K$ is called a sample path under the policy $\pi \in \Pi$. 
The cumulative discounted reward for this sample path is 
\begin{align}
    R^{\pi}(\tau_k) = \sum \limits_{i=0}^{K-k-1}\gamma^{i}r(s_{i+k+1}),
\end{align}
where $\gamma \in [0,1]$ is the discount factor. The discount factor is used in order to take in to account the future rewards. The value function is defined as the expected value calculated for the discounted returns starting from state $s$ and following policy $\pi$ and the Q-function is defined as the expected discounted return if action $a$ is taken at state $s$ and following policy $\pi$. 
That is,
\begin{align}
V^{\pi}(s) &= \mathbb{E}\left[R^{\pi}(\tau_k \mid s_k=s)\right], \quad \text{and}\\ 
Q^{\pi}(s,a) &= \mathbb{E}\left[\sum \limits_{k=0}^{K-1} \gamma^k r(s_{k+1}) \mid s_0 = s, a_0 = a\right].
\end{align}
The Q values for a state determine what action is the best to take. 
Hence, the \emph{modified} policy $\pi$, tailored for this path planning problem is related to the Q-function as
\begin{align}
    \pi(s) &= \begin{cases} 
    \underset{{a \in \mathcal{A}}}{\arg \max} \, Q^{\pi}(s,a), &\text{if } s \notin \text{terminal state} \\
    \hat{a}, &\text{if } s \in \text{terminal state}
    \end{cases}.
\end{align}

\subsection{Reward Function Design \& Its Approximation}
In this work, we consider rewards that depend only on $s_{k+1}$ and it includes a penalty for both traveling and collision with obstacles along with an incentive for being in the goal. That is,
\begin{align} \label{eqn_discont_reward}
    \hat{r}(s^{\prime}) = \mathbf{r_{travel}} + 
    \begin{cases}
        \mathbf{r_{goal}}, & \quad \text{if } s^{\prime} \in \mathcal{X}_{\mathbf{goal}}, \\
        \mathbf{r_{obs}}, & \quad \text{if } s^{\prime} \in \mathcal{X}_{\mathbf{obs}} \text{ or } s^{\prime} \notin \mathcal{X}
    \end{cases},
\end{align}
where $\mathbf{r_{travel}}$ is the travel penalty, $\mathbf{r_{goal}}$ is the reward for reaching the goal, and $\mathbf{r_{obs}}$ is the penalty for obstacle collision. 
The discontinuity in the reward function $\hat{r}(\cdot)$ due to the switching in \eqref{eqn_discont_reward} causes its Lipschitz constant $L_{\hat{r}} \to \infty$ when we approximate the $Q$-function later on using a neural network. Hence, it has to be approximated by a Lipschitz continuous function $r(s^{\prime})$. The radial step function associated with switching to the goal reward can be approximated as
\begin{align}
    f_{\mathbf{goal}}(p_{r}) &= \frac{\mathbf{r_{goal}}}{2}\left(1 + \tanh \left(\frac{d_{2}(p_{r}, \bar{p}, R_{\mathbf{goal}}) }{\delta}\right) \right),
\end{align}
where $p_{r}, \bar{p} \in \mathbb{R}^{n_{r}}$ denote the positions of the robot and the center of the goal respectively with $d_{2}(p_{r}, \bar{p}, R_{\mathbf{goal}}) = R_{\mathbf{goal}} - \left \| p_{r} - \bar{p} \right\Vert_{2}$ being the distance function and $\delta \in \mathbb{R}_{>0}$ is the slope. A similar structure, $f_{\mathbf{bor}}(p_{r})$ can be used for the borders that takes the distance to the borders defined using limits $[\underline{p}, \overline{p}]$ across all the position dimensions. The step function associated with switching to the obstacle collision penalty will have the convex polytope shape as its support. Let $\mathbf{q} := \{q_{i}\}^{n_r}_{i=1}$, with each $q_i$ being a large, positive and even integer. Then, using the modified distance function \footnote{For $n_{r} \geq 2$, the distance function should be defined using the level set of the convex obstacle obtained from its compact support and smooth approximation can be done using the $\tanh$ (or logistics) function in $\mathbb{R}^{n_{r}}$.}
\begin{align}
d_{\mathbf{q}}(p_{r}, \bar{p}, R_{\mathbf{obs}}) = R_{\mathbf{obs}} - \left(\sum^{n_{r}}_{i=1} \left| p^{(i)}_{r} - \bar{p}^{(i)} \right \vert^{q_{i}} \right)^{\frac{1}{\max\{\mathbf{q}\}}},   
\end{align}
with $R_{\mathbf{obs}} > 0$, we can obtain a smooth approximation of a rectangle whose centroid is at $\bar{p}$ and length of its biggest side being $2R_{\mathbf{obs}}$. Then, the polytopic obstacle $i \in \mathcal{O}$ can be represented by stitching together several such rectangles defined using the tuple $\{\bar{p}_{j}, R^{(j)}_{\mathbf{obs}}\}^{M_{i}}_{j = 1}, M_{i} \in \mathbb{N}_{\geq 2}$. Hence,
\begin{align}
    f^{(i)}_{\mathbf{obs}}(p_{r}) = \frac{\mathbf{r_{obs}}}{2}\left(1 + \tanh \left(\frac{ \sum^{M_{i}}_{j = 1} d_{\mathbf{q}}(p_{r}, \bar{p}_{j}, R^{(j)}_{\mathbf{obs}}) }{\delta}\right) \right),
\end{align}
where, $i \in \mathcal{O}$. Then, the Lipschitz continuous approximation $r(s^{\prime})$ of the original reward function $\hat{r}(s^{\prime})$ is given by
\begin{align} \label{eqn_continuous_reward_fn}
    r(s^{\prime}) = \mathbf{r_{travel}} + f_{\mathbf{goal}}(p_{r}) + f_{\mathbf{bor}}(p_{r}) + \sum_{i \in \mathcal{O}} f^{(i)}_{\mathbf{obs}}(p_{r}).
\end{align}
\section{Learning Risk-Averse Control Actions} \label{sec_wdrql}
If the Q-values for a system is known, a policy $\pi$ can be used to maximize the expected returns. 
In order to estimate the Q-values, the standard temporal difference learning based Q-Learning procedure is usually employed, \cite{tabular_q}. From now on, we drop the superscript $\pi$ on $Q^{\pi}(s,a)$ for the brevity of notation. We now define the Bellman operator, $\mathcal{T}: \mathbb{R}^{\mathcal{S} \times \hat{\mathcal{A}}} \to \mathbb{R}^{\mathcal{S} \times \hat{\mathcal{A}}}$ as
\begin{equation} \label{eqn_Bellman}
    \mathcal{T}Q(s,a) = \mathbb{E}_{s^{\prime}}\left[ r(s^{\prime}) + \gamma \max_{a^{\prime} \in \mathcal{A}} Q(s^{\prime}, a^{\prime}) \right], 
\end{equation}
where the outer expectation is over the next states $s^{\prime}$, which come from the transition probability $\mathbb{P}(s^{\prime} \mid s,a)$. Given the continuous state space setting, we propose to use the Deep Q Learning (DQN) approach as in \cite{dqn_atari}, which estimates the Q values by using a deep neural network. Specifically, it utilizes two neural networks namely: i) Q-network $Q(s,a,\theta)$, and ii) the target network $Q(s,a,\theta^-)$. The Q network is trained by experience replay, where a random batch of experiences are sampled from the memory buffer and with the Bellman equation, the targets are calculated. For an experience $ \langle s, a, r, s^{\prime} \rangle$, the target $y$ is calculated as
\begin{equation}
    y = 
    \begin{cases}
        r + \gamma \,  \underset{a^{\prime}\in \mathcal{A}}{\max} \,  Q(s^{\prime},a^{\prime}, \theta^-), \quad & \text{ if } s^{\prime} \text{ is non-terminal,} \\
        r, \quad & \text{ if } s^{\prime} \text{ is terminal}.
    \end{cases}
\end{equation}
\subsection{The Lipschitz Approximated Wasserstein Distributionally Robust Deep Q-Learning} \label{subsec_wass_q_learning}
The Q-function estimation defined in section \ref{sec_mdp} will be formulated as a distributionally robust optimization problem. Taking an action $a$ at state $s$ causes the transition to an unknown state $s^{\prime}$ with unknown distribution $\mathbb{P}_{s^{\prime}} \triangleq \mathbb{P}(s^{\prime} \mid s,a)$. For brevity, $a$ will be omitted in the notation from now.
\begin{assumption} \label{assump:light_tail}
The true distribution $\mathbb{P}_{s^{\prime}}$ is a light-tailed distribution \cite{dr_learning}. That is, $\exists p > 1$ such that,
\begin{equation}
    \mathbb{E}_{\mathbb{P}_{s^{\prime}}} \left[ e^{\|s^{\prime}\|^{p}}\right] = \int_{\mathcal{S}} e^{\|s^{\prime}\|^{p}}d\mathbb{P}_{s^{\prime}}(s^{\prime}) < \infty.
\end{equation}
\end{assumption}

\subsubsection{The Wasserstein Ambiguity Set}
Given a robot state $x_k \in \mathcal{X}$ and an input $a_k \in \hat{\mathcal{A}}$, an empirical distribution for $x_{k+1}$ is given by,
\begin{equation}
    \hat{\mathbb{P}}_{x_{k+1}} := \frac{1}{N} \sum \limits_{i=1}^N \delta_{\hat{x}^{(i)}_{k+1}} = \frac{1}{N} \sum \limits_{i=1}^N \delta_{Ax_k + Ba_k + \hat{w}_i}.
    \label{eq:px}
\end{equation}
By knowing the state of the robot $x_k$, the full state $s_k$ can be obtained by using the positions of the goal and obstacles of the current environment (which do not depend on the position of the robot), since these stay constant during an episode.
For ease of notation, we refer to the next state $s_{k+1}$ as $s^{\prime}$, and the samples of $s^{\prime}$ obtained from (\ref{eq:px}), are denoted as $\hat{s}^{\prime (i)}$ for $i=1,\dots N$. Then, the empirical distribution is given by $\hat{\mathbb{P}}_{s^{\prime}} = \frac{1}{N} \sum \limits_{i=1}^N \delta_{\hat{s}^{\prime (i)}}$. When an action $a$ is performed while in state $s$, the nominal distribution for the center of the Wasserstein ball will be $\hat{\mathbb{P}}_{s^{\prime}}$, and the worst case transition will be coming from a distribution that is inside this ball. 
We define the ambiguity set $\mathcal{B}_{s,a}$,
\begin{equation}
    \mathcal{B}_{s,a} := \left \{ \mathbb{P}_{s^{\prime}} \in \mathcal{P}(\mathcal{S}) \mid W_1 \left(\hat{\mathbb{P}}_{s^{\prime}}, \mathbb{P}_{s^{\prime}}\right) < \epsilon_{s^{\prime}} \right\}.
\end{equation}
The Wasserstein ball radius $\epsilon_{s^{\prime}}$ is chosen such that the true distribution $\mathbb{P}_{s^{\prime}}$ lies within this Wasserstein ball with probability greater than $1-\beta$. The $\beta$ parameter will determine the allowed risk factor for the solution. A smaller $\beta$ will result in a larger radius which causes the generated policy to be much more risk averse and vice-versa. Since, the radius $\epsilon_{s^{\prime}}$ quantifies the amount of trust (distrust) that we have over the $\hat{\mathbb{P}}_{s^{\prime}}$, it is chosen such that 
\begin{equation}
    \mathbb{P}\left( \mathbb{P}_{s^{\prime}} \in \mathcal{B}_{s,a} \right) \geq 1 - \beta , \quad \beta \in [0,1].
\end{equation}
\begin{lemma}
Based on Assumption \ref{assump:light_tail}, for an empirical distribution $\hat{\mathbb{P}}_{s^{\prime}}$ with $N$ atoms and $\rho = \mathbf{diam}(\mathbf{supp}(\hat{\mathbb{P}}_{s^{\prime}}))$, the radius of the Wasserstein ambiguity set for the state distributions is
\begin{equation} \label{eqn_wasserstein_ball_radius}
    \epsilon_{s^{\prime}} = \rho \sqrt{ \frac{2}{N} \ln \left( \frac{1}{\beta}\right)}.
\end{equation} 
\label{lem:radius}
\end{lemma}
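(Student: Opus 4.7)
The plan is to establish this as a concentration inequality for the empirical Wasserstein distance $W_{1}(\hat{\mathbb{P}}_{s'}, \mathbb{P}_{s'})$, showing that the stated radius $\epsilon_{s'}$ is exactly what is required for $\mathbb{P}(\mathbb{P}_{s'} \in \mathcal{B}_{s,a}) \geq 1-\beta$. The natural tool is McDiarmid's bounded differences inequality applied to the functional $F: \mathcal{S}^{N} \to \mathbb{R}_{\geq 0}$ defined by $F(\hat{s}'^{(1)},\dots,\hat{s}'^{(N)}) = W_{1}(\hat{\mathbb{P}}_{s'}, \mathbb{P}_{s'})$, since by \eqref{eq:px} the empirical next-state distribution depends deterministically (given $x_{k}, a_{k}$) on the process-noise samples.

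In order, the steps I would carry out are as follows. First, I invoke the Kantorovich-Rubinstein dual representation $W_{1}(\mu,\nu) = \sup_{\mathrm{Lip}(f)\leq 1} |\int f\,d\mu - \int f\,d\nu|$ to turn the Wasserstein distance into a supremum over 1-Lipschitz test functions. Second, I verify the bounded-differences property: if one atom $\hat{s}'^{(i)}$ is replaced by an arbitrary $\tilde{s}'^{(i)}$ in the support of $\hat{\mathbb{P}}_{s'}$, the empirical measure changes by $\frac{1}{N}(\delta_{\tilde{s}'^{(i)}} - \delta_{\hat{s}'^{(i)}})$, and since any 1-Lipschitz $f$ differs by at most $\rho = \mathbf{diam}(\mathbf{supp}(\hat{\mathbb{P}}_{s'}))$ on two points of the support, $F$ shifts by at most $\rho/N$. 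Third, I apply McDiarmid with constants $c_{i} = \rho/N$ to obtain
\begin{equation*}
    \mathbb{P}\bigl(F - \mathbb{E}[F] \geq t\bigr) \;\leq\; \exp\!\left(-\frac{2 t^{2}}{\sum_{i=1}^{N} c_{i}^{2}}\right) = \exp\!\left(-\frac{2 N t^{2}}{\rho^{2}}\right).
\end{equation*}
Fourth, I use Assumption \ref{assump:light_tail} to invoke the light-tail measure-concentration result underpinning Theorem 5 of \cite{kuhn} (in the Esfahani-Kuhn style) to bound the mean term $\mathbb{E}[F]$ by a quantity of the same order $\rho/\sqrt{N}$; combining this with the McDiarmid deviation bound absorbs the mean into the tail. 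Finally, equating the resulting tail probability with $\beta$ and solving for $t$ yields the stated $\epsilon_{s'} = \rho\sqrt{(2/N)\ln(1/\beta)}$.

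The main obstacle will be the fourth step: controlling $\mathbb{E}[F]$. McDiarmid alone bounds only the deviation from the mean, and without the light-tail hypothesis the expected empirical Wasserstein error can scale poorly with the ambient dimension of $\mathcal{S}$. Assumption \ref{assump:light_tail} is exactly what lets us insert a sub-Gaussian-type bound on $\mathbb{E}[F]$ of the same order as the deviation bound, so that the two pieces can be combined cleanly into a single clean expression. A secondary subtlety is that $\rho$ is the diameter of the random empirical support rather than a deterministic constant, so the bounded-differences constant is interpreted conditionally on the realized samples; this is fine because the resulting inequality is then pointwise in the sample path.
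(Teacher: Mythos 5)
Your plan diverges from the paper and, as written, contains a genuine gap at exactly the step you flag as the ``main obstacle.'' The paper does not run a McDiarmid argument at all: its proof simply invokes the light-tail concentration inequality from the cited reference, $\mathbb{P}(W_1(\mathbb{P}_{s^{\prime}},\hat{\mathbb{P}}_{s^{\prime}}) \geq \epsilon_{s^{\prime}}) \leq c_1 e^{-c_2 N \epsilon_{s^{\prime}}^{\max(d,2)}}$ (for $\epsilon_{s^{\prime}}\le 1$), then specializes to the discrete/bounded-support setting where the tail takes the form $e^{-\epsilon_{s^{\prime}}^{2}N/(2\rho^{2})}$, sets that quantity equal to $\beta$, and solves for $\epsilon_{s^{\prime}}$. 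In other words, the whole content of the lemma is an inversion of an already-established measure-concentration bound; no bound on $\mathbb{E}[W_1(\hat{\mathbb{P}}_{s^{\prime}},\mathbb{P}_{s^{\prime}})]$ is ever needed because the cited inequality already controls the full tail, not just the deviation from the mean.

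The gap in your route is the fourth step. Kantorovich--Rubinstein duality plus bounded differences (with constants $\rho/N$) correctly gives $\mathbb{P}(W_1 - \mathbb{E}[W_1] \geq t) \leq e^{-2Nt^{2}/\rho^{2}}$, but Assumption \ref{assump:light_tail} does \emph{not} let you bound $\mathbb{E}[W_1(\hat{\mathbb{P}}_{s^{\prime}},\mathbb{P}_{s^{\prime}})]$ by a term of order $\rho/\sqrt{N}$: by the Fournier--Guillin rates (which are essentially sharp even for compactly supported, light-tailed measures), $\mathbb{E}[W_1]$ scales like $N^{-1/d}$ for $d \geq 3$, and here $d = n_{\mathcal{S}} = (2+M)n_x$ (equal to $8$ in the paper's experiments). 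So the mean term dominates the McDiarmid deviation and cannot be ``absorbed into the tail''; your argument would yield $\epsilon_{s^{\prime}} = \mathbb{E}[W_1] + \rho\sqrt{\ln(1/\beta)/(2N)}$ with an unremovable dimension-dependent term, not the clean radius of Lemma \ref{lem:radius}. Two smaller issues: even ignoring the mean, inverting your McDiarmid bound gives $t = \rho\sqrt{\ln(1/\beta)/(2N)}$, which differs by a factor of two from the stated $\rho\sqrt{(2/N)\ln(1/\beta)}$ (the paper's formula corresponds to the weaker exponent $e^{-N\epsilon_{s^{\prime}}^{2}/(2\rho^{2})}$ of the cited discrete-support bound); and McDiarmid requires deterministic bounded-difference constants, so $\rho$ must be the (assumed finite) diameter of the support of the true distribution --- conditioning on the realized empirical support, as you propose, is not a valid substitute, and the paper itself only treats the empirical diameter as an estimate of that true quantity.
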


\subsubsection{Approximated Solution to The Wasserstein Distributionally Robust $Q$-learning Problem}
We define the distributionally robust Bellman operator $\hat{\mathcal{T}}: \mathbb{R}^{\mathcal{S} \times \hat{\mathcal{A}}} \to \mathbb{R}^{\mathcal{S} \times \hat{\mathcal{A}}}$ to represent the worst case expected returns, so that risk can be incorporates into the Q-values. That is, 
\begin{align} \label{eq:dr_bellman}
    \hat{\mathcal{T}} Q(s,a) &:= \inf_{\mathbb{P}_{s^{\prime}} \in \mathcal{B}_{s,a}} \mathbb{E}_{s^{\prime} \sim \mathbb{P}_{s^{\prime}}} \left[ h(s^{\prime}) \right], \quad \text{where}, \\
    h(s^{\prime}) &:= \underbrace{r(s^{\prime})}_{:= h_{r}(s^{\prime})} + \underbrace{\gamma \max_{a^{\prime} \in \mathcal{A}}Q(s^{\prime}, a^{\prime})}_{:= h_{Q}(s^{\prime})} \label{eqn_hs}.  
\end{align}
Since the Q-function is approximated using a neural network with hidden layers and non-linear activation functions, $h(s^{\prime})$ turns out to be a non-convex function of the states. Since an exact solution to the infinite dimensional problem \eqref{eq:dr_bellman} using duality theory is difficult to find when the objective function is non-convex, we resort to the Lipschitz constant based approximation. 
\begin{lemma} \label{lem:lip_min}
The Lipschitz approximation of the right hand side of \eqref{eq:dr_bellman} has an equivalent solution for the case when the objective function $h$ is to be minimized and this results in a lower bound for (\ref{eq:dr_bellman}), where $\epsilon_{s^{\prime}}$ becomes larger in practice and the solution can become more risk averse. That is,
\begin{align}
    \inf_{\mathbb{P}_{s^{\prime}} \in \mathcal{B}_{s,a}} \mathbb{E}_{s^{\prime}} [h(s^{\prime})] \geq \mathbb{E}_{s^{\prime}} [h(s^{\prime})] - \epsilon_{s^{\prime}} L_h.
    \label{eq:dro_approx}
\end{align}
\end{lemma}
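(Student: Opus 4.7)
The plan is to reduce the distributionally robust minimization over the Wasserstein ball to a deterministic perturbation of the nominal expectation by exploiting the dual (Kantorovich--Rubinstein) characterization of the type-1 Wasserstein distance. Concretely, for any two measures $\mathbb{Q}_1,\mathbb{Q}_2 \in \mathcal{P}_1(\mathcal{S})$ and any measurable $f:\mathcal{S}\to\mathbb{R}$ with Lipschitz constant $L_f$, one has
\begin{equation}
\left| \mathbb{E}_{\mathbb{Q}_1}[f] - \mathbb{E}_{\mathbb{Q}_2}[f] \right| \;\leq\; L_f \, W_1(\mathbb{Q}_1,\mathbb{Q}_2).
\end{equation}
I will apply this identity with $f=h$ as defined in \eqref{eqn_hs}, $\mathbb{Q}_1=\mathbb{P}_{s^{\prime}}$ (an arbitrary distribution in the ambiguity set) and $\mathbb{Q}_2=\hat{\mathbb{P}}_{s^{\prime}}$ (the empirical center). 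This is exactly the estimate that Theorem~5 of \cite{kuhn} uses to produce Lipschitz-based surrogates for distributionally robust expectations when the objective is not concave in the state.

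Before invoking the bound, I would briefly argue that $h$ inherits a finite Lipschitz constant $L_h$. The component $h_r(s^{\prime})=r(s^{\prime})$ is Lipschitz with constant $L_r<\infty$ by the explicit smooth construction in \eqref{eqn_continuous_reward_fn} (the $\tanh$-based approximations used for the goal, borders, and obstacle terms are all globally Lipschitz). The component $h_Q(s^{\prime})=\gamma\,\max_{a^{\prime}\in\mathcal{A}} Q(s^{\prime},a^{\prime})$ is a finite maximum of neural-network-parameterized functions; since the action set $\hat{\mathcal{A}}$ is finite and each $Q(\cdot,a^{\prime},\theta)$ is a composition of Lipschitz layers, the pointwise maximum is Lipschitz with constant $L_{h_Q}\leq \gamma \max_{a^{\prime}} L_{Q(\cdot,a^{\prime})}$. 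Thus $L_h \leq L_r + L_{h_Q} < \infty$.

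Given this, for every $\mathbb{P}_{s^{\prime}} \in \mathcal{B}_{s,a}$ the Kantorovich--Rubinstein estimate combined with the defining inequality $W_1(\hat{\mathbb{P}}_{s^{\prime}},\mathbb{P}_{s^{\prime}})<\epsilon_{s^{\prime}}$ yields
\begin{equation}
\mathbb{E}_{s^{\prime}\sim\mathbb{P}_{s^{\prime}}}[h(s^{\prime})] \;\geq\; \mathbb{E}_{s^{\prime}\sim\hat{\mathbb{P}}_{s^{\prime}}}[h(s^{\prime})] - L_h\,\epsilon_{s^{\prime}}.
\end{equation}
The right-hand side no longer depends on the choice of $\mathbb{P}_{s^{\prime}}\in\mathcal{B}_{s,a}$, so taking the infimum over the ambiguity set on the left produces exactly \eqref{eq:dro_approx}. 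The final sentence of the lemma—that $\epsilon_{s^{\prime}}$ can be taken larger in practice to make the policy more risk averse—follows immediately from the monotone dependence of \eqref{eqn_wasserstein_ball_radius} on the confidence parameter $\beta$.

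The main obstacle is not the algebra but the Lipschitz hypothesis on $h$: the DQN target network used to evaluate $Q(s^{\prime},a^{\prime},\theta^{-})$ must be constrained (for example by spectral normalization of its weights) so that $L_h$ is finite and computable, otherwise the right-hand side of the bound is vacuous. This is precisely why the reward function was approximated in \S\ref{sec_problem_formulation} by the smooth surrogate \eqref{eqn_continuous_reward_fn} rather than the discontinuous $\hat{r}$ of \eqref{eqn_discont_reward}, and why one pays a Lipschitz-penalty during training; I would make this dependency explicit in the proof.
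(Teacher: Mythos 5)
Your proposal is correct, and it reaches \eqref{eq:dro_approx} by a somewhat different route than the paper. The paper's appendix proof is a sign-flip reduction: it takes as its starting point the Lipschitz upper-bound surrogate for the worst-case \emph{maximization} problem (the Theorem 5 estimate of \cite{kuhn}) applied to $-h$, namely $\sup_{\mathbb{P}\in\mathcal{B}_{s,a}}\mathbb{E}_{\mathbb{P}}[-h]\le\mathbb{E}_{\hat{\mathbb{P}}_{s^{\prime}}}[-h]+\epsilon_{s^{\prime}}K_{-h}$, then uses $\sup\mathbb{E}[-h]=-\inf\mathbb{E}[h]$ to flip the inequality, and finishes with the observation that $K_{-h}=K_{h}$. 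You instead prove the lower bound directly: the Kantorovich--Rubinstein (equivalently, coupling) inequality $\left|\mathbb{E}_{\mathbb{Q}_1}[h]-\mathbb{E}_{\mathbb{Q}_2}[h]\right|\le L_h W_1(\mathbb{Q}_1,\mathbb{Q}_2)$ yields $\mathbb{E}_{\mathbb{P}_{s^{\prime}}}[h]\ge\mathbb{E}_{\hat{\mathbb{P}}_{s^{\prime}}}[h]-L_h\epsilon_{s^{\prime}}$ for every $\mathbb{P}_{s^{\prime}}\in\mathcal{B}_{s,a}$, and taking the infimum preserves the bound because the right-hand side is independent of $\mathbb{P}_{s^{\prime}}$. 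Both arguments rest on the same underlying fact---expectations of an $L$-Lipschitz function under two measures differ by at most $L$ times their type-1 Wasserstein distance---but your version is more self-contained (it does not invoke the sup-side surrogate as a black box), while the paper's version makes explicit that the minimization case is the mirror image of the cited maximization result. Your front-loading of the finiteness of $L_h$ is a reasonable addition (the paper defers this to Lemmas \ref{lem:lip_sum}--\ref{lem:lip_max} and the subsequent theorems); the only overstatement is your closing remark that the target network \emph{must} be constrained, e.g.\ by spectral normalization, for $L_h$ to be finite: any ReLU network with finite weights is already globally Lipschitz, and the paper merely estimates an upper bound on that constant with LipSDP rather than enforcing one during training.
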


\subsubsection{Calculating the Lipschitz Constant $L_h$ of $h(s^{\prime})$.}
The Lipschitz constant for $h_{r}(s^{\prime})$ and $h_{Q}(s^{\prime})$ can be calculated or estimated independently and then combined to get the Lipschitz constant of $h(s^{\prime})$. The second part of $h(s^{\prime})$ given by $h_{Q}(s^{\prime})$ contains the Q-function which is approximated by a neural network. The neural network takes the state $s$ as an input and returns the Q-values for each action. An upper bound for the Lipschitz constant of a dense neural network with ReLU activation functions can be approximated using the LipSDP package developed by \cite{LipSDP}.
\begin{lemma} \label{lem:lip_sum}
Let $f_i: \mathbb{R}^n \to \mathbb{R}, i = [1:N]$ be Lipschitz continuous with constants $K_{f_{i}}$. Then, the Lipschitz constant of the functions $f_{g} = \max \left( \left\{ f_i(x) \right\}^{N}_{i=1} \right)$ and $f_{f} = \sum^{N}_{j=1} f_{j}(x)$ are respectively 
\begin{align} \label{lem:lip_q}
    K_{f_{g}} = \max\left\{K_{f_1},\dots,K_{f_N}\right\}, \quad \text{and} \,  K_{f_{f}} = \sum^{N}_{j=1} K_{f_{i}}.
\end{align}
\end{lemma}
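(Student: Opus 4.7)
The plan is to establish each of the two Lipschitz bounds separately by unfolding the definition of the Lipschitz constant and applying elementary inequalities; no deep machinery is required.

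For the sum $f_{f}(x) = \sum_{j=1}^{N} f_{j}(x)$, the argument is a direct application of the triangle inequality. For arbitrary $x,y \in \mathbb{R}^{n}$, I would write
\begin{align*}
    \left| f_{f}(x) - f_{f}(y) \right|
    = \left| \sum_{j=1}^{N} \bigl( f_{j}(x) - f_{j}(y) \bigr) \right|
    \leq \sum_{j=1}^{N} \left| f_{j}(x) - f_{j}(y) \right|
    \leq \Bigl( \sum_{j=1}^{N} K_{f_{j}} \Bigr) \left\| x - y \right\|,
\end{align*}
which yields $K_{f_{f}} \leq \sum_{j=1}^{N} K_{f_{j}}$ by the definition of the Lipschitz constant.

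For the pointwise maximum $f_{g}(x) = \max_{i \in [1:N]} f_{i}(x)$, the slight subtlety is that the index attaining the maximum may differ between $x$ and $y$. I would handle this by a case split: fix $x,y$ and, without loss of generality (swapping the roles of $x$ and $y$ if needed), assume $f_{g}(x) \geq f_{g}(y)$. Let $j^{\star} \in \arg\max_{i} f_{i}(x)$, so that $f_{g}(x) = f_{j^{\star}}(x)$. Since $f_{g}(y) \geq f_{j^{\star}}(y)$ by definition of the max, I obtain
\begin{align*}
    0 \leq f_{g}(x) - f_{g}(y)
    \leq f_{j^{\star}}(x) - f_{j^{\star}}(y)
    \leq K_{f_{j^{\star}}} \left\| x - y \right\|
    \leq \max_{i \in [1:N]} K_{f_{i}} \cdot \left\| x - y \right\|.
\end{align*}
Taking absolute values and combining the two cases establishes $K_{f_{g}} \leq \max_{i} K_{f_{i}}$.

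Neither step is a real obstacle; the only mildly tricky point is the max case, where one must be careful not to naively bound $|f_{g}(x) - f_{g}(y)|$ by the difference evaluated at a single fixed index, but rather to exploit that the max at $y$ dominates the $j^{\star}$-th component. Everything else follows from the triangle inequality and the hypothesis that each $f_{i}$ is $K_{f_{i}}$-Lipschitz.
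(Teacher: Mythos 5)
Your proof is correct, and for the maximum it takes a genuinely different route from the paper. The paper handles $f_f = \sum_j f_j$ exactly as you do (triangle inequality, stated for $N=2$ and extended by induction), but for $f_g = \max_i f_i$ it argues via gradients: it writes $\|\nabla g\|$ as a case split depending on which $f_i$ is active and bounds it by $\max_i \|\nabla f_i\|$, again inducting on $N$. Your argument instead stays at the metric level: fixing $x,y$ with $f_g(x) \geq f_g(y)$, choosing a maximizing index $j^\star$ at $x$, and using $f_g(y) \geq f_{j^\star}(y)$ to squeeze $f_g(x) - f_g(y) \leq f_{j^\star}(x) - f_{j^\star}(y)$. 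This is more elementary and strictly more general, since it never assumes differentiability — the paper's gradient argument tacitly relies on the $f_i$ being differentiable (or on an almost-everywhere gradient bound via Rademacher, which it does not invoke) and is delicate precisely on the switching set where two $f_i$ coincide, which your index-selection argument sidesteps entirely. One small remark applying to both proofs: what is actually established is that $\max_i K_{f_i}$ and $\sum_j K_{f_j}$ are \emph{valid} Lipschitz constants (upper bounds), not that they are the smallest such constants, which is all the paper needs downstream.
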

\begin{lemma}
Let $A \in \mathbb{R}$, $\delta \in \mathbb{R}_{> 0}$ and $p,g \in \mathbb{R}^2$. Then, the Lipschitz constants of the scalar functional $F(x) = \frac{A}{2}\left(1 + \tanh \left(\frac{x}{\delta} \right) \right)$, the function $f(p) = \frac{A}{2}\left(1 + \tanh\left(\frac{R - \|p - g\|_2}{\delta}\right) \right)$, and the function $\mathcal{K}(p) = \frac{A}{2}\left(1 + \tanh\left(\frac{d_{q}(p, \bar{p}, R)}{\delta}\right) \right)$ are equal and $K_f = L_f = \mathcal{K}_{f} = \frac{|A|}{2\delta}$ respectively. 
\label{lem:lip_tanh}
\end{lemma}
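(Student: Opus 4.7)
The plan is to reduce all three cases to a single scalar derivative bound and then extend to the vector-valued cases via composition with 1-Lipschitz distance-like maps. Because all three functions have the common outer form $y \mapsto (A/2)(1 + \tanh(y/\delta))$, I would first establish the scalar bound and then treat the two compositions.

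First, for $F(x)$, I would differentiate directly to obtain $F'(x) = (A/(2\delta))\,\mathrm{sech}^{2}(x/\delta)$. Since $0 < \mathrm{sech}^{2}(t) \le 1$ with equality at $t=0$, we get $\sup_{x \in \mathbb{R}}|F'(x)| = |A|/(2\delta)$, attained at $x = 0$. The mean value theorem then yields $|F(x_{1}) - F(x_{2})| \le (|A|/(2\delta))\,|x_{1} - x_{2}|$, and equality is approached as $x_{1},x_{2} \to 0$, so $K_{F} = |A|/(2\delta)$.

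Second, for $f(p)$, I would write $f = F \circ g$ with the scalar map $g(p) = R - \|p - g\|_{2}$. The reverse triangle inequality immediately gives $|g(p_{1}) - g(p_{2})| = \bigl|\,\|p_{2}-g\|_{2} - \|p_{1}-g\|_{2}\,\bigr| \le \|p_{1} - p_{2}\|_{2}$, so $g$ is $1$-Lipschitz on $\mathbb{R}^{2}$. Composition of Lipschitz maps then gives $L_{f} \le K_{F}\cdot 1 = |A|/(2\delta)$, and this bound is attained along the ray where $\|p - g\|_{2} = R$ (i.e.\ where the $\tanh$ has its steepest slope and $g$ is $1$-Lipschitz with equality along the radial direction), proving $L_{f} = |A|/(2\delta)$.

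Third, for $\mathcal{K}(p)$, I would again write $\mathcal{K} = F \circ \tilde{g}$ with $\tilde{g}(p) := d_{\mathbf{q}}(p,\bar{p},R)$. The composition step and the scalar bound from $F$ are identical; the only new ingredient is showing that $\tilde{g}$ is $1$-Lipschitz in $\|\cdot\|_{2}$. I would argue this by viewing $p \mapsto \bigl(\sum_{i} |p^{(i)} - \bar{p}^{(i)}|^{q_{i}}\bigr)^{1/\max\{\mathbf{q}\}}$ as a norm-like map, using that (i) each $|p^{(i)} - \bar{p}^{(i)}|$ is $1$-Lipschitz, (ii) the $\ell_{q}$-norm is $1$-Lipschitz with respect to itself, and (iii) the exponent $1/\max\{\mathbf{q}\}$ together with the shape parameters $q_{i}$ is chosen (cf.\ the construction of the smoothed obstacle level sets) so that the resulting map is $1$-Lipschitz in $\|\cdot\|_{2}$ on the relevant workspace. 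Combining these with composition gives $\mathcal{K}_{f} = |A|/(2\delta)$.

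The main obstacle is step three: the expression $d_{\mathbf{q}}$ mixes possibly distinct even integer exponents $q_{i}$ with a uniform outer root $1/\max\{\mathbf{q}\}$, so it is not a standard $\ell_{p}$-norm, and one must justify its $1$-Lipschitz behavior in $\|\cdot\|_{2}$ carefully (either by a direct gradient bound on the workspace $\mathcal{X}$, which is bounded by assumption, or by a norm-equivalence argument on $\mathbb{R}^{n_{r}}$). Once that technical point is settled, the rest is a clean two-line composition argument that mirrors the proof of the second claim.
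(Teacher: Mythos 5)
Your handling of $F$ and $f$ is correct and, for those two parts, cleaner than the paper's own argument: the paper computes $\partial f/\partial x$ explicitly and locates its maximum on the circle $\|p-g\|_2 = R$, whereas you factor $f = F\circ g$ with $g(p)=R-\|p-g\|_2$, get $1$-Lipschitzness of the inner map from the reverse triangle inequality, and note the bound is attained radially on that circle. Same constant, more economical reasoning.

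The third part is where your proposal has a genuine gap, and it is exactly the point you flag as "to be settled": the claim that $\tilde g(p)=d_{\mathbf{q}}(p,\bar p,R)$ is $1$-Lipschitz in $\|\cdot\|_2$ is not a deferrable technicality, because it is false whenever the exponents $q_i$ are not all equal. Along the $i$-th coordinate axis near the centroid the inner map behaves like $|p^{(i)}-\bar p^{(i)}|^{q_i/\max\{\mathbf{q}\}}$ with exponent strictly less than $1$, so its derivative is unbounded as $p^{(i)}\to\bar p^{(i)}$; $\tilde g$ is then not Lipschitz at all there, and neither norm equivalence (which only yields a constant, not the constant $1$) nor restricting to the bounded workspace $\mathcal X$ (the centroid lies inside $\mathcal X$) repairs it. Your composition route closes only in the homogeneous case $q_i\equiv q$, where $d_{\mathbf{q}} = R-\|p-\bar p\|_q$ and $\|\cdot\|_q\le\|\cdot\|_2$ for $q\ge 2$ gives $1$-Lipschitzness by the same reverse-triangle argument you used for $f$. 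The paper sidesteps the issue by a different (if informal) strategy: it bounds the derivative of the composite $\mathcal K$ directly and argues that the extremal slope occurs on the zero level set of $d_{\mathbf{q}}$, i.e.\ on the smoothed rectangle boundary, where the $\tanh$ factor has maximal slope and the relevant directional slope of the inner map is $1$, giving $|A|/(2\delta)$. To complete your proof you should either state the equal-exponent restriction explicitly or replace step three by such a direct bound on the gradient of $\mathcal K$ along the level sets, rather than routing through a global $1$-Lipschitz property of $d_{\mathbf{q}}$ that does not hold.
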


\begin{theorem}
Given assumption \ref{assume:obs_goal_separate}, the Lipschitz constant of the reward function $r$ given by \eqref{eqn_cont_reward} is 
\[L_{r} = \frac{\max \{|\mathbf{r_{goal}}|,|\mathbf{r_{obs}}| \}}{2\delta}.\]
\end{theorem}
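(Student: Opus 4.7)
The plan is to control $\|\nabla r\|$ pointwise by decomposing $r$ into its constant piece and the smoothed bump terms associated with the goal, the environment border, and each obstacle, and then exploit Assumption~\ref{assume:obs_goal_separate} to collapse the naive sum bound into a maximum.

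First, $\mathbf{r_{travel}}$ is constant and contributes nothing. By Lemma~\ref{lem:lip_tanh}, the Lipschitz constants of the remaining terms are $L_{f_{\mathbf{goal}}} = \tfrac{|\mathbf{r_{goal}}|}{2\delta}$, and $L_{f_{\mathbf{bor}}} = L_{f^{(i)}_{\mathbf{obs}}} = \tfrac{|\mathbf{r_{obs}}|}{2\delta}$ for each $i\in\mathcal{O}$. A direct application of Lemma~\ref{lem:lip_sum} yields $L_r \leq \tfrac{|\mathbf{r_{goal}}|}{2\delta} + (M+1)\tfrac{|\mathbf{r_{obs}}|}{2\delta}$, which is too crude for the claimed bound.

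The sharpening rests on the observation that for any term of the form $f(p) = \tfrac{A}{2}(1+\tanh(d(p)/\delta))$, the gradient magnitude equals $\tfrac{|A|}{2\delta}\,\mathrm{sech}^{2}(d(p)/\delta)\,\|\nabla d(p)\|$, which is concentrated on the boundary layer $\{p : d(p) \approx 0\}$ of the associated region. By Assumption~\ref{assume:obs_goal_separate}, these boundary layers are pairwise separated by at least $L_{\min}$, so at any $p$ at most one term has a non-negligible gradient while the others are deep inside the saturation regime of $\tanh$, where $\mathrm{sech}^{2}(\cdot/\delta)$ vanishes. I would make this precise by bounding $\|\nabla r(p)\|$ above by the peak contribution of a single active term plus a residual that is exponentially small in $L_{\min}/\delta$, so that whenever $\delta \ll L_{\min}$ one obtains $\|\nabla r(p)\| \leq \tfrac{\max\{|\mathbf{r_{goal}}|,|\mathbf{r_{obs}}|\}}{2\delta}$. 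Tightness then follows by evaluating the gradient at the boundary of the goal region or of any single obstacle, along the outward normal of the corresponding distance function, which saturates $\mathrm{sech}^{2}(0)=1$ and attains the per-term peak.

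The main obstacle will be turning the ``effectively disjoint support'' intuition into a genuine equality: the residual contributions from the non-active $\tanh$ tails never vanish identically at finite $\delta$. I expect to either treat the claim as exact in the asymptotic regime $\delta/L_{\min} \to 0$, or to absorb the exponentially small residuals into a slightly enlarged effective $\delta$ and present the bound with that understanding. The remaining book-keeping, namely identifying which term is active in a neighbourhood of each candidate maximiser and applying Lemma~\ref{lem:lip_sum} only within that single active neighbourhood, is routine once the separation argument is in place.
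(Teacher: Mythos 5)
Your approach is essentially the paper's own: the published proof likewise invokes Assumption~\ref{assume:obs_goal_separate} to assert that the goal, border, and obstacle terms ``do not interfere,'' takes each term's constant $\frac{|A|}{2\delta}$ from Lemma~\ref{lem:lip_tanh}, and concludes with the maximum rather than the sum via Lemma~\ref{lem:lip_sum}. Your additional caveat---that the $\tanh$ tails contribute exponentially small but nonzero gradients at finite $\delta$, so the claimed equality is exact only in the regime $\delta \ll L_{min}$ (otherwise it holds up to an exponentially small residual)---is correct and is precisely the point the paper's two-sentence proof silently glosses over.
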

\begin{proof}
Based on Assumption \ref{assume:obs_goal_separate}, the individual terms that contribute to the total reward function $r$ in \eqref{eqn_cont_reward} do not interfere with each other. 
Then, it follows from Lemma \ref{lem:lip_sum} that the Lipschitz constant of the reward function is the maximum of the Lipschitz constants of the individual terms.
\end{proof}

\begin{lemma} \label{lem:lip_max}
Given that $h_{Q}(s^{\prime}) = \gamma \max \left\{Q\left(s^{\prime},a_1 \right), \dots, Q\left(s^{\prime},a_{n_{\mathcal{A}}}\right)\right\}$, where $n_{\mathcal{A}} = |\mathcal{A}|$, and the network has the upper bounded Lipschitz constants $K_{a_i}, \forall a_i \in \mathcal{A}$, the Lipschitz constant of $h_{Q}(s^{\prime})$ is
\begin{equation} \label{lem:q}
    L_Q = \gamma \max \left \{K_{a_1},\dots,K_{a_{n_{\mathcal{A}}}} \right\}.
\end{equation}
\end{lemma}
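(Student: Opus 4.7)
The plan is to reduce the claim to a direct application of Lemma \ref{lem:lip_sum} followed by the elementary fact that scaling a Lipschitz function by a non-negative scalar multiplies its Lipschitz constant by that scalar.

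First, I would introduce the intermediate function $g(s^{\prime}) := \max\{Q(s^{\prime}, a_{1}), \dots, Q(s^{\prime}, a_{n_{\mathcal{A}}})\}$ so that $h_{Q}(s^{\prime}) = \gamma \, g(s^{\prime})$. For each fixed action $a_{i} \in \mathcal{A}$, the map $s^{\prime} \mapsto Q(s^{\prime}, a_{i})$ is precisely the $i$-th output of the deep $Q$-network, which by hypothesis (and typically certified via the LipSDP procedure of \cite{LipSDP}) is Lipschitz continuous with constant $K_{a_{i}}$. The first half of Lemma \ref{lem:lip_sum} then delivers the Lipschitz constant of $g$ as $\max\{K_{a_{1}}, \dots, K_{a_{n_{\mathcal{A}}}}\}$.

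The second step is the scaling argument. For any $s_{1}^{\prime}, s_{2}^{\prime} \in \mathcal{S}$, I would write
\begin{equation*}
    |h_{Q}(s_{1}^{\prime}) - h_{Q}(s_{2}^{\prime})| = \gamma \, |g(s_{1}^{\prime}) - g(s_{2}^{\prime})| \leq \gamma \, \bigl(\max_{i} K_{a_{i}}\bigr) \, \|s_{1}^{\prime} - s_{2}^{\prime}\|,
\end{equation*}
where the non-negativity of $\gamma \in [0,1]$ is what permits pulling the discount factor out of the absolute value cleanly (rather than picking up a $|\gamma|$). This chain of inequalities identifies the claimed constant $L_{Q} = \gamma \max\{K_{a_{1}}, \dots, K_{a_{n_{\mathcal{A}}}}\}$.

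There is essentially no technical obstacle here, as the argument is a two-line composition of Lemma \ref{lem:lip_sum} with scalar multiplication. The only mildly delicate point to flag explicitly is that the per-action Lipschitz constants $K_{a_{i}}$ are computed with respect to the same norm on $\mathcal{S}$ that appears in the Wasserstein distance \eqref{eqn_wass_distance}, so that the resulting $L_{Q}$ is consistent with the bound used in Lemma \ref{lem:lip_min}; this is a modeling remark rather than a proof step, but worth noting so that the downstream combination with $L_{r}$ into $L_{h}$ is well defined.
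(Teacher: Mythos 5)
Your proposal is correct and matches the paper's argument: the paper likewise proves this lemma by direct application of Lemma \ref{lem:lip_sum} to $h_{Q}(s^{\prime})$, with your explicit treatment of the scaling by $\gamma \geq 0$ simply spelling out a step the paper leaves implicit.
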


Proof of the above lemma follows by direct application of Lemma \ref{lem:lip_sum} on $h_{Q}(s^{\prime})$. Having found the Lipschitz constants of both $h_{r}(s^{\prime})$ and $h_{Q}(s^{\prime})$, the following theorem establishes the Lipschitz constant for the objective function $h(s^{\prime})$. 

\begin{theorem}
The upper bound of the Lipschitz constant of $h(s^{\prime})$ defined in \eqref{eq:dr_bellman} is given by 
\begin{align}
    L_{h} \leq \frac{\max\{|\mathbf{r_{goal}}|,|\mathbf{r_{obs}}|\}}{2\delta} + \gamma \max \left\{K_{a_1},\dots,K_{a_{n_{\mathcal{A}}}}\right\}.
\end{align}
\end{theorem}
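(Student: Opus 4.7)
The plan is to exploit the additive decomposition $h(s^{\prime}) = h_{r}(s^{\prime}) + h_{Q}(s^{\prime})$ given in \eqref{eqn_hs} and invoke the previously established Lipschitz bounds on the two summands separately, combining them via the sum rule from Lemma \ref{lem:lip_sum}.

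First, I would appeal to the preceding theorem (the Lipschitz bound on the reward $r$) to conclude $L_{h_{r}} = \frac{\max\{|\mathbf{r_{goal}}|,|\mathbf{r_{obs}}|\}}{2\delta}$. This already handles the reward part of the objective, since $h_{r}(s^{\prime})=r(s^{\prime})$ by definition. Second, I would invoke Lemma \ref{lem:lip_max} to obtain $L_{h_{Q}} \le \gamma \max\{K_{a_1},\dots,K_{a_{n_{\mathcal{A}}}}\}$, using the per-action Lipschitz constants $K_{a_i}$ of the Q-network output heads (which, per the discussion of LipSDP, are themselves upper bounds computable from the neural network parameters).

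Third, I would apply the sum part of Lemma \ref{lem:lip_sum} to the decomposition $h = h_{r}+h_{Q}$, which yields $L_{h} \le L_{h_{r}} + L_{h_{Q}}$. Substituting the two bounds from the previous steps gives exactly the stated inequality. No further estimation is needed because both $h_{r}$ and $h_{Q}$ map $\mathcal{S}$ into $\mathbb{R}$ with respect to the same norm, so the sum rule applies directly without any change of metric.

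I do not expect any serious obstacle here: the result is essentially bookkeeping, a direct assembly of three already-proved facts. The only subtle point worth stating explicitly is that the bound is an inequality rather than an equality because (i) Lemma \ref{lem:lip_sum} yields an upper bound for sums of Lipschitz functions, and (ii) the per-head neural network Lipschitz constants $K_{a_i}$ supplied by LipSDP are themselves upper bounds, not exact values. I would close by remarking that the tightness of the final bound is therefore inherited from the tightness of the LipSDP estimate, which is the only nontrivial numerical ingredient in the expression.
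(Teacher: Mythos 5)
Your proposal is correct and follows essentially the same route as the paper: decompose $h = h_{r} + h_{Q}$ via \eqref{eqn_hs}, bound the two parts by the reward-Lipschitz theorem and Lemma \ref{lem:lip_max}, and combine with the sum rule of Lemma \ref{lem:lip_sum}, noting the inequality stems from $L_Q$ being only an upper bound. Your write-up is merely a more explicit version of the paper's terse argument, with the same justification for why the result is stated as an inequality.
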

\begin{proof}
Using \eqref{eqn_hs}, the upper bound for the Lipschitz constant of $h(s^{\prime})$ can be computed by using Lemma \ref{lem:lip_sum} as,
\begin{equation}
    L_h \leq \frac{\max\{|\mathbf{r_{goal}}|,|\mathbf{r_{obs}}|\}}{2\delta} + \gamma \max \left\{K_{a_1},\dots,K_{a_{n_{\mathcal{A}}}}\right\}.
\end{equation}
The result is an upper bound due to $L_{Q}$ being an upper bound to the $h_{Q}(s^{\prime})$.
\end{proof}
\begin{algorithm}
    \caption{Lipschitz Approximated Wasserstein Distributionally Robust DQN}
    \begin{algorithmic}
        \Require Disturbance samples $\hat{w}_1,\dots \hat{w}_N$, Learning rate $\eta$, Max episodes $N_{ep}$, Episode Length $N_{step}$, Batch size $N_{batch}$
        \State Initialize replay memory $\mathcal{M} \gets \emptyset$
        \State Initialize network weights $\theta,\theta^{-}$
        \State Estimate Lipschitz constant of network $\theta^-$
        \State Compute $\epsilon_s$ by (\ref{eqn_wasserstein_ball_radius})
        \For{episode $ = 1:N_{ep}$}
            \State Initialize $s \in \mathcal{S}$
            \For{$k=1:N_{step}$}
                \State Select action $a_k$ with $\epsilon$-greedy policy $\pi$
                \State Observe next state $s^{\prime}$ and reward $r$
                \State Append experience $(s,a,r,s^{\prime})$ to $\mathcal{M}$
                \State Initialize loss $\delta \gets 0$
                \For{$j=1:N_{batch}$}
                    \State Sample experience $(s_j,a_j,r_j,s_j^{\prime})$ from $\mathcal{M}$
                    \State Compute nominal distribution $\hat{\mathbb{P}}_{s^{\prime}}$
                    \State Approximate target $y_j$ by (\ref{eq:dro_approx}) with the target network
                    \State Accumulate loss $\delta \gets \delta + (y_j - Q(s_j,a_j;\theta))^2$
                \EndFor
                \State Update weights $\theta$ by loss $\delta$ with backpropagation
                \State Set $\theta^- \gets \theta$ and compute Lipschitz constant of network $\theta^-$ every $\Gamma$ steps
            \EndFor
        \EndFor
    \end{algorithmic}
    \label{alg:drq_learning}
\end{algorithm}
\section{Numerical Results} \label{sec_num_results}
We consider the robot to be moving in an environment $\mathcal{X} \subset \mathbb{R}^{2}$ with the limits of $\mathcal{X}$ being $[-10, 10]^{2}$ in both dimensions \footnote{We believe that such a toy example is rich enough to demonstrate our proposed approach given the infinite dimensional DRDQN objective.}. There are in total two obstacles that are circular in shape (most simple convex shape assumption made for the sake of simplicity) and a goal region with equal radius namely, $R_{\mathbf{goal}} = R_{\mathbf{obs}}^{(1)} = R_{\mathbf{obs}}^{(2)} = 2$. The robot moves within $\mathcal{X}$ according to the following dynamics,
\begin{equation}
    x_{k+1} = \underbrace{
    \begin{bmatrix}
    1 & 0 \\
    0 & 1
    \end{bmatrix}}
    _{A}x_k + \underbrace{
    \begin{bmatrix}
    1 & 0 \\
    0 & 1
    \end{bmatrix}}
    _{B}u_k + w_k.
\end{equation}
The state of the robot $x_k$ represents the position of the robot in $\mathcal{X} \subset \mathbb{R}^{2}$. The process disturbance $w_k \in \mathbb{R}^{2}$ shifts the position of the robot by a random amount in each axis. For simulation purposes, we considered $10^{4}$ samples of process noise $w$ that were sampled from distributions with zero mean and covariance being equal to $0.15 I_{2}$. The action space $\hat{\mathcal{A}}$ consists of $\left |\hat{\mathcal{A}} \right \vert = 9$ actions where each action is a $\mathbb{R}^2$ vector with unit norm, that represent a step that can be taken in one of the 8 equally spaced radial directions along with a null action. 
The robot takes a step in a specified direction for each action and stays still if a null action is selected. 
The reward function has the constants $\mathbf{r_{travel}} = -0.001$, $\mathbf{r_{goal}} = 1$ and $\mathbf{r_{obs}} = -1$. 
The steepness of the $\tanh$ functions is chosen as $\delta = 0.1$. The continuous reward function that is used here is,
\begin{equation} \label{eqn_cont_reward}
\footnotesize
\begin{aligned}
    r(s^{\prime}) &= \mathbf{r_{travel}} + \frac{\mathbf{r_{goal}}}{2}\left(1 + \tanh \left(\frac{R_{\mathbf{goal}} - \left \|p_{r} - p_g\right \Vert_{2}}{\delta}\right)\right) \\
    &+\frac{\mathbf{r_{obs}}}{2} \sum^{n_{r}}_{j=1} \left(2 + \tanh \left(\frac{\underline{p}(j) - p_{r}(j)}{\delta}\right) + \tanh \left(\frac{p_{r}(j)-\overline{p}(j)}{\delta}\right) \right) \\
    &+ \sum_{i=1}^M \frac{\mathbf{r_{obs}}}{2}\left(1 + \tanh \left(\frac{R_{\mathbf{obs}}^{(i)} - \left \|p_{r} - p_{obs}^{(i)}\right \Vert_{2}}{\delta}\right)\right).
\end{aligned}
\end{equation}

\begin{figure*}[h]
    \centering
    \begin{subfigure}[b]{0.45\textwidth}
        \centering
        \includegraphics[width=\textwidth]{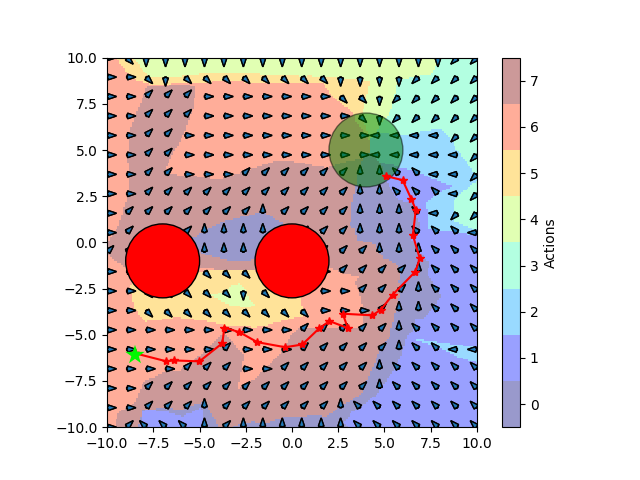}
        \caption{DQN policy}
        \label{fig:dqn_policy}
    \end{subfigure}
    \begin{subfigure}[b]{0.45\textwidth}
        \centering
        \includegraphics[width=\textwidth]{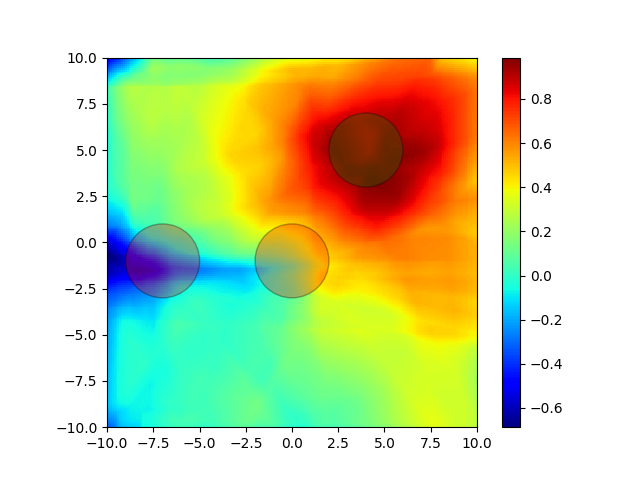}
        \caption{DQN values}
        \label{fig:dqn_values}
    \end{subfigure}
    \begin{subfigure}[b]{0.45\textwidth}
        \centering
        \includegraphics[width=\textwidth]{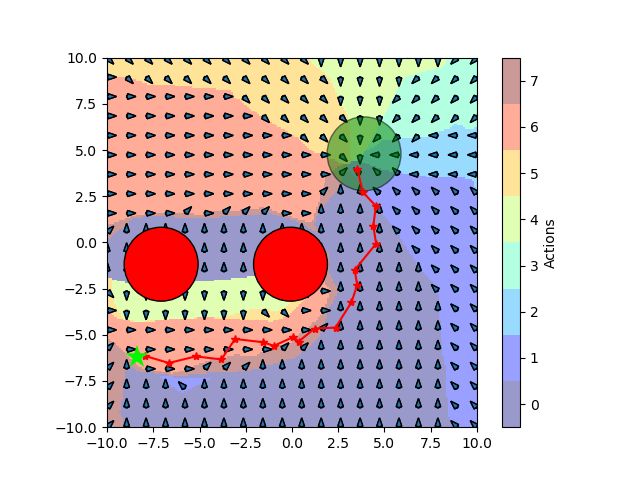}
        \caption{DRDQN policy with $\epsilon_s = 0.067$.}
        \label{fig:drdqn_policy}
    \end{subfigure}
    \begin{subfigure}[b]{0.45\textwidth}
        \centering
        \includegraphics[width=\textwidth]{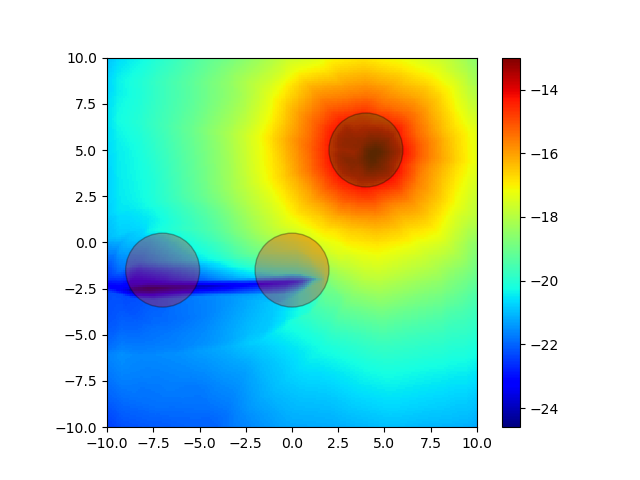}
        \caption{DRDQN values with $\epsilon_s = 0.067$.}
        \label{fig:drdqn_values}
    \end{subfigure}
    \caption{The result of training DQN model and DRDQN model with $\epsilon_s = 0.067$ is shown here. Also, the solution trajectories from a starting position in green star to the goal region in green color avoiding both the red color obstacles are shown in both cases. The plot on the left shows the learned control policies and the one on right depicts the learned Q values.}
    \label{fig:policy_values}
\end{figure*}

\subsection{Discussion of Results}
\begin{table}[h]
\centering
\begin{tabular}{|c|c|c|c|c|c|c|c|}
\hline
& & \multicolumn{6}{|c|}{Noise covariance} \\ 
\cline{3-8} 
& & \multicolumn{2}{|c|}{$\Sigma_w = 0_{2}$} & \multicolumn{2}{|c|}{$\Sigma_w = 0.15 I_{2}$} & \multicolumn{2}{|c|}{$\Sigma_w = 0.3 I_{2}$} \\ 
\cline{3-8} 
& & \multicolumn{2}{|c|}{Reward} & \multicolumn{2}{|c|}{Reward} & \multicolumn{2}{|c|}{Reward} \\ 
\cline{3-8} 
Models & $\epsilon_{s^{\prime}}$ & Mean & Std & Mean & Std & Mean & Std \\ \hline\hline
{\scriptsize DQN} & N/A & 0.636 & 0.545 & 0.662 & 0.514 & 0.627 & 0.573 \\
\hline
{\scriptsize DRDQN} & $0$ & 0.850 & 0.334 & 0.811 & 0.401 & 0.749 & 0.510 \\
\hline
{\scriptsize DRDQN} & $0.067$ & 0.829 & 0.356 & 0.811 & 0.387 & 0.756 & 0.489 \\ 
\hline
\end{tabular}
\caption{The mean and the standard deviation of the total rewards with different noise covariances corresponding to different training models are tabulated here.}
\label{tab:rewards}
\end{table}

\begin{table*}[h]
\centering
\resizebox{\textwidth}{!}{%
\begin{tabular}{|c|c|c|c|c|c|c|c|c|c|c|}
\hline
&  & \multicolumn{3}{|c|}{Reached Goal} & \multicolumn{3}{|c|}{Resulted in Collision} & \multicolumn{3}{|c|}{Wandering Around}   \\ 
\cline{3-11} 
&  & \multicolumn{3}{|c|}{$\Sigma_{w} (\times I_{2})$} & \multicolumn{3}{|c|}{$\Sigma_{w} (\times I_{2})$} & \multicolumn{3}{|c|}{$\Sigma_{w} (\times I_{2})$}   \\ 
\cline{3-11} 
Models & $\epsilon_{s^{\prime}}$ & 0 & 0.15 & 0.3 & 0  & 0.15  & 0.3  & 0 & 0.15 & 0.3 \\ 
\hline
DQN & N/A & 76.7\% & 82.2\% & 81.4\% & 1.5\% & 6.6\% & 6.9\% & 21.7\%  & 13.9\% & 11.9\% \\
\hline
DRDQN & $0$ & 97.9\% & 96.4\% & 93.1\% & 0.7\% & 3.1\% & 6.5\% & 1.3\% & 0.3\% & 0.2\% \\
\hline
DRDQN & $0.067$ & 95.3\% & 95.7\% & 93\% & 0.5\% & 2.4\% & 5.5\% & 4.1\% & 1.8\% & 1.4\% \\ \hline
\end{tabular}%
}
\caption{The percentage of trajectories that reached the goal, resulted in collision and those that did neither are tabulated here for different noise covariances corresponding to different training models.}
\label{tab:results}
\end{table*}
The hyperparameter details of the training results are made available in the supplementary material. The resulting policy and the state values for the trained models can be seen in Figure \ref{fig:policy_values}. 
The arrows represent the action that the policy gives at the respective robot position and goal/obstacle positions. The heatmap represents the same values with color but in higher resolution to better understand the decision boundaries. The figures on the right side of Figure \ref{fig:policy_values} represent the value of each state $s$ which can be computed by $\max_{a \in \mathcal{A}}Q(s,a)$. It can be seen that the rewards propagate from the goal and the obstacles. Further, the resulting learned policy restricts the robot moving between the obstacles and there exists a boundary around the obstacles. When compared with the DQN model, our solution exhibits the most minimum pessimism (risk aversion). This difference is due to the fact that DQN learns the expected rewards, while the DRDQN learns the worst case expected rewards. Due to the noise samples used in DRDQN model with $\epsilon_{s^{\prime}}$ calculated using \eqref{eqn_wasserstein_ball_radius}, learning the policy occurs in less steps compared to the DQN model. However DRDQN can take more time since the computational load is higher for calculating the targets. The DRDQN with $\epsilon_{s^{\prime}} = 0$ is virtually the same as DQN as it learns faster since it calculates the expected values in \eqref{eqn_Bellman} more accurately compared to that of DQN which uses only one sample. 
DQN achieves a lower score overall, since the method only uses one experience per experience replay to train itself, while DRDQN uses the samples provided which results in a much better approximation of (worst case) expected future returns.
The models have been evaluated by running $10^{5}$ episodes each with random goal/obstacle configurations, for three different noise distributions.
As seen in Table \ref{tab:rewards} both versions of DRDQN have a higher average total reward compared to DQN with lower variances. Also in Table \ref{tab:results}, the percentage of trajectories that have reached the goal, collided with an obstacle or border or have not reached the goal or collided, has been provided. It can be inferred that as the covariance of the noise increases, the DRDQN model is able to maintain a low collision rate due to the worst case approximations. Any safe RL algorithm that assumes a particular distribution for $w$ or a bound for $w$ has a greater chance to fail in this setting as the unknown true noise distribution will lead to different transitions than the one assumed.

\section{Conclusion} \label{sec_conclusion}
We proposed a path planning using approximated Wasserstein distributionally robust deep Q-learning approach. Through carefully designed reward function, we showed how to learn safe control policy for uncertain robots operating in an environment. Our numerical simulation results demonstrated our proposed approach. \\

\bibliographystyle{IEEEtran}
\bibliography{references}

\begin{thebibliography}{10}
\providecommand{\url}[1]{#1}
\csname url@samestyle\endcsname
\providecommand{\newblock}{\relax}
\providecommand{\bibinfo}[2]{#2}
\providecommand{\BIBentrySTDinterwordspacing}{\spaceskip=0pt\relax}
\providecommand{\BIBentryALTinterwordstretchfactor}{4}
\providecommand{\BIBentryALTinterwordspacing}{\spaceskip=\fontdimen2\font plus
\BIBentryALTinterwordstretchfactor\fontdimen3\font minus
  \fontdimen4\font\relax}
\providecommand{\BIBforeignlanguage}[2]{{%
\expandafter\ifx\csname l@#1\endcsname\relax
\typeout{** WARNING: IEEEtran.bst: No hyphenation pattern has been}%
\typeout{** loaded for the language `#1'. Using the pattern for}%
\typeout{** the default language instead.}%
\else
\language=\csname l@#1\endcsname
\fi
#2}}
\providecommand{\BIBdecl}{\relax}
\BIBdecl

\bibitem{xieDeepRLPath}
J.~Xie, Z.~Shao, Y.~Li, Y.~Guan, and J.~Tan, ``Deep reinforcement learning with
  optimized reward functions for robotic trajectory planning,'' \emph{IEEE
  Access}, vol.~7, pp. 105\,669--105\,679, 2019.

\bibitem{yan2020towards}
C.~Yan, X.~Xiang, and C.~Wang, ``Towards real-time path planning through deep
  reinforcement learning for a uav in dynamic environments,'' \emph{Journal of
  Intelligent \& Robotic Systems}, vol.~98, no.~2, pp. 297--309, 2020.

\bibitem{luders2010chance}
B.~Luders, M.~Kothari, and J.~How, ``Chance constrained rrt for probabilistic
  robustness to environmental uncertainty,'' in \emph{AIAA guidance,
  navigation, and control conference}, 2010, p. 8160.

\bibitem{base_paper}
J.~Raajan, P.~V. Srihari, J.~P. Satya, B.~Bhikkaji, and R.~Pasumarthy, ``Real
  time path planning of robot using deep reinforcement learning,''
  vol.~53.\hskip 1em plus 0.5em minus 0.4em\relax Elsevier B.V., 2020, pp.
  15\,602--15\,607.

\bibitem{kandel2020safe}
A.~Kandel and S.~J. Moura, ``Safe {W}asserstein constrained deep q-learning,''
  \emph{arXiv preprint arXiv:2002.03016}, 2020.

\bibitem{majumdarRisk}
A.~Majumdar and M.~Pavone, ``How should a robot assess risk? towards an
  axiomatic theory of risk in robotics,'' in \emph{Robotics Research}.\hskip
  1em plus 0.5em minus 0.4em\relax Springer, 2020, pp. 75--84.

\bibitem{aoude2013probabilistically}
G.~S. Aoude, B.~D. Luders, J.~M. Joseph, N.~Roy, and J.~P. How,
  ``Probabilistically safe motion planning to avoid dynamic obstacles with
  uncertain motion patterns,'' \emph{Autonomous Robots}, vol.~35, no.~1, pp.
  51--76, 2013.

\bibitem{subramani2019risk}
D.~N. Subramani and P.~F. Lermusiaux, ``Risk-optimal path planning in
  stochastic dynamic environments,'' \emph{Computer Methods in Applied
  Mechanics and Engineering}, vol. 353, pp. 391--415, 2019.

\bibitem{xiao2019explicit}
X.~Xiao, J.~Dufek, and R.~Murphy, ``Explicit-risk-aware path planning with
  reward maximization,'' \emph{arXiv preprint arXiv:1903.03187}, 2019.

\bibitem{lathrop2021distributionally}
P.~Lathrop, B.~Boardman, and S.~Martinez, ``Distributionally safe path
  planning: Wasserstein safe rrt,'' \emph{IEEE Robotics and Automation
  Letters}, vol.~7, no.~1, pp. 430--437, 2021.

\bibitem{sleimanRisk}
S.~Safaoui, B.~J. Gravell, V.~Renganathan, and T.~H. Summers, ``Risk-averse rrt
  planning with nonlinear steering and tracking controllers for nonlinear
  robotic systems under uncertainty,'' in \emph{2021 IEEE/RSJ International
  Conference on Intelligent Robots and Systems (IROS)}, 2021, pp. 3681--3688.

\bibitem{renganathan2022risk}
V.~Renganathan, S.~Safaoui, A.~Kothari, B.~Gravell, I.~Shames, and T.~Summers,
  ``Risk bounded nonlinear robot motion planning with integrated perception \&
  control,'' \emph{arXiv preprint arXiv:2201.01483}, 2022.

\bibitem{morimoto2005robust}
J.~Morimoto and K.~Doya, ``Robust reinforcement learning,'' \emph{Neural
  computation}, vol.~17, no.~2, pp. 335--359, 2005.

\bibitem{brunke2022safe}
L.~Brunke, M.~Greeff, A.~W. Hall, Z.~Yuan, S.~Zhou, J.~Panerati, and A.~P.
  Schoellig, ``Safe learning in robotics: From learning-based control to safe
  reinforcement learning,'' \emph{Annual Review of Control, Robotics, and
  Autonomous Systems}, vol.~5, pp. 411--444, 2022.

\bibitem{eysenbach2021maximum}
B.~Eysenbach and S.~Levine, ``Maximum entropy rl (provably) solves some robust
  rl problems,'' \emph{arXiv preprint arXiv:2103.06257}, 2021.

\bibitem{dqn_atari}
\BIBentryALTinterwordspacing
V.~Mnih, K.~Kavukcuoglu, D.~Silver, A.~Graves, I.~Antonoglou, D.~Wierstra, and
  M.~A. Riedmiller, ``Playing atari with deep reinforcement learning,''
  \emph{CoRR}, vol. abs/1312.5602, 2013. [Online]. Available:
  \url{http://arxiv.org/abs/1312.5602}
\BIBentrySTDinterwordspacing

\bibitem{kuhn}
D.~Kuhn, P.~M. Esfahani, V.~A. Nguyen, and S.~Shafieezadeh-Abadeh,
  ``Wasserstein distributionally robust optimization: Theory and applications
  in machine learning,'' in \emph{Operations research \& management science in
  the age of analytics}.\hskip 1em plus 0.5em minus 0.4em\relax Informs, 2019,
  pp. 130--166.

\bibitem{tabular_q}
C.~J. C.~H. Watkins and P.~Dayan, ``Q-learning,'' \emph{Machine Learning},
  vol.~8, no.~3, pp. 279--292, 1992.

\bibitem{dr_learning}
R.~Chen and I.~Paschalidis, \emph{Distributionally Robust Learning}, 12 2020,
  vol.~4.

\bibitem{LipSDP}
M.~Fazlyab, A.~Robey, H.~Hassani, M.~Morari, and G.~Pappas, ``Efficient and
  accurate estimation of lipschitz constants for deep neural networks,''
  \emph{Advances in Neural Information Processing Systems}, vol.~32, 2019.

\bibitem{dueling}
Z.~Wang, T.~Schaul, M.~Hessel, H.~Hasselt, M.~Lanctot, and N.~Freitas,
  ``Dueling network architectures for deep reinforcement learning,'' in
  \emph{International conference on machine learning}.\hskip 1em plus 0.5em
  minus 0.4em\relax PMLR, 2016, pp. 1995--2003.

\bibitem{priority}
\BIBentryALTinterwordspacing
T.~Schaul, J.~Quan, I.~Antonoglou, and D.~Silver, ``Prioritized experience
  replay,'' 2015. [Online]. Available: \url{https://arxiv.org/abs/1511.05952}
\BIBentrySTDinterwordspacing

\end{thebibliography}

\section*{Appendix}
\textbf{Proof of Lemma} \ref{lem:radius}:
Based on Assumption \ref{assump:light_tail}, the risk factor $\beta$ and the radius $\epsilon_{s^{\prime}}$ are related as \cite{dr_learning},
\begin{equation*}
    \mathbb{P}(W_1(\mathbb{P}_{s^{\prime}},\hat{\mathbb{P}}_{s^{\prime}}) \geq \epsilon_{s^{\prime}}) \leq 
    \begin{cases}
    c_1 e^{-c_2 N \epsilon_{s^{\prime}}^{\max(d,2)}}, \quad & \text{if } \epsilon_{s^{\prime}} \leq 1 \\
    c_1 e^{-c_2 N \epsilon_{s^{\prime}}^{a} }, \quad & \text{if } \epsilon_{s^{\prime}} > 1
    \end{cases},
\end{equation*}
where $d$ is the dimension of $s^{\prime} \in \mathbb{R}^d$. In order to obtain $\mathbb{P}(W_1(\hat{\mathbb{P}}_{s^{\prime}}, \mathbb{P}_{s^{\prime}}) \leq \epsilon_{s^{\prime}}) \geq 1 - \beta$, the radius has to be selected as follows
\begin{equation*}
    \epsilon_{s^{\prime}}(\beta) = 
    \begin{cases}
    \left(\frac{\ln(c_1\beta^{-1})}{c_2N}\right) ^ {1/\max(d,2)}, \quad & \text{if } N \geq \frac{\ln(c_1\beta^{-1})}{c_2} \\ 
    \left(\frac{\ln(c_1\beta^{-1})}{c_2N}\right)^{1/a}, \quad & \text{if } N < \frac{\ln(c_1\beta^{-1})}{c_2}
    \end{cases}
\end{equation*}
where $c_1,c_2 \in \mathbb{R}$ are constants that depend on $d$ and $N$. For a discrete nominal distribution $\hat{\mathbb{P}}_{s^{\prime}}$, the radius for the ambiguity set can be calculated as,
\begin{equation*}
    \mathbb{P}(W(\mathbb{P}_{s^{\prime}},\hat{\mathbb{P}}_{s^{\prime}}) \leq \epsilon_{s^{\prime}}) \geq 1 - \underbrace{e^{- \frac{\epsilon^{2}_{s^{\prime}} N}{2 \rho^2}}}_{:= \beta}, 
\end{equation*}
where $\rho$ is the diameter of the support of the true distribution $\mathbb{P}_{s^{\prime}}$. For this paper, $\rho$ is estimated with $\rho \sim \mathbf{diam}(\mathbf{supp}(\hat{\mathbb{P}}_{s^{\prime}}))$. By solving for $\epsilon_{s^{\prime}}$, we can get,
\begin{equation*}
    \epsilon_{s^{\prime}} = \rho \sqrt{ \frac{2}{N} \ln \left( \frac{1}{\beta}\right)}.
\end{equation*}

\textbf{Proof of Lemma} \ref{lem:lip_min}: The Lipschitz approximation can be converted in to a minimization problem by switching the objective function $h$ with $-h$ and multiplying by $-1$. So,
\begin{equation*}
    \sup_{\mathbb{P} \in \mathcal{B}_{s,a}} \mathbb{E}_{s^{\prime} \sim \mathbb{P}}[h(s^{\prime})] = - \inf_{\mathbb{P} \in \mathcal{B}_{s,a}} \mathbb{E}_{s^{\prime} \sim \mathbb{P}}[-h(s^{\prime})]
    \label{eq:sup_inf}
\end{equation*}
If we substitute $h$ with $-h$ in the maximization problem, the Lipschitz approximation becomes,
\begin{equation*}
\begin{split}
    \sup_{\mathbb{P} \in \mathcal{B}_{s,a}} \mathbb{E}_{s^{\prime} \sim \mathbb{P}}[-h(s^{\prime})] &\leq \mathbb{E}_{s^{\prime} \sim \hat{\mathbb{P}}_{s^{\prime}}}[-h(s^{\prime})] + \epsilon_{s^{\prime}} K_{-h} \\
    \iff  \inf_{\mathbb{P} \in \mathcal{B}_{s,a}} \mathbb{E}_{s^{\prime} \sim \mathbb{P}}[h(s^{\prime})] &\geq -\mathbb{E}_{s^{\prime} \sim \hat{\mathbb{P}}_{s^{\prime}}}[-h(s^{\prime})] - \epsilon_{s^{\prime}} K_{-h} \\
    \iff \inf_{\mathbb{P} \in \mathcal{B}_{s,a}} \mathbb{E}_{s^{\prime} \sim \mathbb{P}}[h(s^{\prime})] &\geq \mathbb{E}_{s^{\prime} \sim \hat{\mathbb{P}}_{s^{\prime}}}[h(s^{\prime})] - \epsilon_{s^{\prime}} K_{-h}, \\
\end{split}
\end{equation*}
where $K_{-h}$ is the Lipschitz constant of $-h$. The Lipschitz constant $L_{-h}$ is equivalent to $L_{h}$ since,
\begin{equation*}
    \underbrace{\|-h(x) - (-h(y))\|}_{=\|h(x) - h(y)\|} \leq K_{-h}\|x-y\|, \quad \forall x,y \in \mathbb{R}^{n_{\mathcal{S}}}, x\neq y
\end{equation*}
Hence, $\inf_{\mathbb{P} \in \mathcal{B}_{s,a}} \mathbb{E}_{s^{\prime} \sim \mathbb{P}}[h(s^{\prime})] \geq \mathbb{E}_{s^{\prime} \sim \hat{\mathbb{P}}_{s^{\prime}}}[h(s^{\prime})] - \epsilon_{s^{\prime}} K_{h} $.

\textbf{Proof of Lemma} \ref{lem:lip_sum}: We will prove for the case $N = 2$ and the result for $N > 2$ follows similarly. For the two Lipschitz continuous functions $f_1, f_2$, and $g = f_1 + f_2$, we see that
\begin{equation*}
    \begin{split}
        |g(x) - g(y)| &= |f_1(x) + f_2(x) - f_1(y) - f_2(y)| \\
        &\leq |f_1(x) - f_1(y)| + |f_2(x) - f_2(y)| \\
        &\leq \underbrace{(L_1 + L_2)}_{:= L_{g}} \|x - y\|_2, \quad \forall x,y \in \mathbb{R}^n
    \end{split}
\end{equation*}
For the function $g = \max\{f_1,f_2\}$, $f_1,f_2 : \mathbb{R}^n \to \mathbb{R}$, where $f_1$ and $f_2$ are Lipschitz continuous with constants $K_{f_1},K_{f_2}$, the Lipschitz constants can be defined by,
\begin{equation*}
    \|\nabla f_1\| \leq K_{f_1}, \quad \|\nabla f_2\| \leq K_{f_2}.
\end{equation*}
The gradient of $g$ is,
\begin{equation}
\footnotesize
\|\nabla g\| = 
    \begin{cases}
        \|\nabla f_1\|, & \text{if } f_1(x) > f_2(x) \\
        \|\nabla f_2\|, & \text{if } f_2(x) > f_1(x)
    \end{cases}
    \leq \max \{ \|\nabla f_1\|, \|\nabla f_2\|\}. \nonumber
\end{equation}
Thus $K_g = \max \{ K_{f_1}, K_{f_2}\}$. For a function that is the maximum of $N$ functions, this process can be applied inductively to find $K_g = \max \{K_{f_1},\dots K_{f_N}\}.$

\textbf{Proof of Lemma \ref{lem:lip_tanh}:} For the given scalar functional, its Lipschitz constant corresponds to the maximum magnitude of its slope.
\begin{equation*}
    \begin{split}
        L_f &= \sup_{x \in \mathbb{R}} \left|F^{\prime}(x)\right| \implies F^{\prime}(x) = \frac{A}{2\delta}\left(1 - \tanh^2\left(\frac{x}{\delta}\right) \right) \\
        F^{\prime \prime}(x) &= 0 \implies x = 0 \implies L_f = \left|F^{\prime}(0)\right| = \frac{|A|}{2\delta}.
    \end{split}
\end{equation*}
Similarly, given that $p=[x,y]$ and $g=[g_x,g_y]$ and $f(p) = \frac{A}{2}(1 + \tanh(h(p)))$, where
\begin{equation*}
    h(p) = \frac{R - \|p - g\|_2}{\delta}, \quad \text{and}
\end{equation*}
\begin{equation*}
    \left| \frac{\partial f}{\partial x} \right| = \frac{|A| \, |x - g_x|}{2\delta \|p-g\|_2} \left( 1 - \tanh^2 \left(\frac{R - \|p-g\|_2}{\delta}\right) \right) 
\end{equation*}
The maximum slope occurs at $\tanh(0)$, which corresponds to $\|p-g\|_2 = R$. For a point on this circle such as $|x-g_x| = R$ and $y=g_y$, the slope becomes $\frac{|A|}{2\delta}$. Thus the Lipschitz constant of the function $f$ is $K_f = \frac{|A|}{2\delta}$.
A similar reasoning can be applied for finding the Lipschitz constant of the function $\mathcal{K}(p)$. The maximum value of the derivative of $\mathcal{K}(p)$ occurs at the points where $R_{\mathbf{obs}} = \left(\sum^{n_{r}}_{i=1} \left| p^{(i)}_{r} - \bar{p}^{(i)} \right \vert^{q_{i}} \right)^{\frac{1}{\max\{\mathbf{q}\}}}$. The points that satisfy this create a rectangle that makes up one of the borders of the obstacle. Since our convex polytope is made up of several such stitched together rectangles, it is straightforward to see that $\mathcal{K}_{f} = |A|/(2\delta)$.

\subsection*{Implementation Details}
The simulations were performed on a Dell R530 with 2 Xeon E5-2620 6-core 12-thread CPU's and 132GB of RAM. Simulation time can be improved with a GPU during training. The episodes are limited to $50$ steps. We use a dense neural network with $n_{\mathcal{S}} = 8$ inputs that correspond to the current states to approximate the Q-values. The network has two hidden layers with 150 neurons each that have ReLU activation functions. The network has 9 outputs where each output corresponds to the Q-value for the state action pair. The DQN model has a duelling architecture which is explored in \cite{dueling}. Both models utilize prioritized experience replay in order to prioritize rare experiences during training and the hyperparameters were used as recommended in \cite{priority}. The memory buffer $\mathcal{M}$ is a fixed size buffer that once full, a new experiences replaces the oldest one. During training, collisions do not end the simulation in order to allow the robot to explore further and gain experiences that reach the goal. This does not change anything for the experience replay part since terminal states are handled separately. The probability of taking a random action $\epsilon$ is reduced from 1 to 0.1 linearly for the first $3/4$ of training and kept at 0.1 for the remaining episodes.

\begin{table}[h]
\centering
\begin{tabular}{|c|c|c|c|c|}
\hline
Hyperparameters & DQN& \multicolumn{2}{|c|}{DRDQN} \\
\cline{3-4} 
& & $\epsilon_{s^{\prime}} = 0$ & $\epsilon_{s^{\prime}} = 0.067$              \\ 
\hline \hline
\multicolumn{1}{|c|}{$\gamma$} & 0.9 & 0.9 & 0.9 \\
\hline
\multicolumn{1}{|c|}{$\eta$} & $10^{-4}$ & $10^{-4}$ & $10^{-4}$ \\
\hline
\multicolumn{1}{|c|}{Steps per episode} & 50 & 50 & 50 \\
\hline
\multicolumn{1}{|c|}{Total steps} & $10^7$ & $2.4 \times 10^{5}$ & $2.4 \times 10^{5}$ \\
\hline
\multicolumn{1}{|c|}{$N_{batch}$} & 32 & 32 & 32 \\
\hline
\multicolumn{1}{|c|}{$\beta$} & N/A & N/A & 0.1 \\
\hline
\multicolumn{1}{|c|}{$N$ (samples)} & N/A & $10^{4}$ & $10^{4}$ \\
\hline
\multicolumn{1}{|c|}{$\Gamma$} & 5000 & 1500 & 1500 \\
\hline
\multicolumn{1}{|c|}{$|\mathcal{M}|$} & 5000 & 5000 & 5000 \\
\hline
\multicolumn{1}{|c|}{$\epsilon$} & $1 \to 0.1$ & $1 \to 0.1$ & $1 \to 0.1$\\ 
\hline
\end{tabular}
\caption{Hyperparameters used in the simulation results.}
\label{tab:hyperparameters}
\end{table}

\end{document}